\documentclass[runningheads]{llncs}

\usepackage[mobile]{eccv}

\usepackage{tabularx}
\usepackage{xcolor}
\usepackage{pifont}

\usepackage{eccvabbrv}

\usepackage[table]{xcolor}

\definecolor{rankone}{RGB}{244,239,228} 
\definecolor{ranktwo}{RGB}{236,241,247} 

\newcommand{\topone}[1]{\cellcolor{rankone}\textbf{#1}}
\newcommand{\toptwo}[1]{\cellcolor{ranktwo}\underline{#1}}

\usepackage{amsmath}
\usepackage{graphicx}
\usepackage{booktabs}
\usepackage{multirow}
\usepackage{array}
\usepackage{algorithm}
\usepackage{algpseudocode} 

\newcolumntype{C}[1]{>{\centering\arraybackslash}p{#1}}

\usepackage[accsupp]{axessibility}  

\usepackage{hyperref}

\usepackage{orcidlink}

\DeclareMathOperator*{\argmin}{arg\,min}

\begin{document}

\title{MLP Splatting: Object-Centric Neural Fields}

\titlerunning{MLP-Splatting}

\newcommand{\cofirst}{\textsuperscript{*}}

\author{Shinjeong Kim\cofirst\orcidlink{0000-0002-3199-4111} \and
Yuzhou Cheng\cofirst\orcidlink{0009-0009-9463-0209} \and
Xin Kong\cofirst\orcidlink{0000-0003-4491-2522} \and\\
Paul H. J. Kelly\orcidlink{0000-0001-5905-1804} \and
Andrew J. Davison\orcidlink{0000-0002-3784-099X}}

\authorrunning{S.~Kim et al.}

\institute{Department of Computing, Imperial College London\\
\email{\{s.kim, y.cheng25, x.kong21, p.kelly, a.davison\}@imperial.ac.uk}\\
}

\maketitle

\begingroup
\renewcommand{\thefootnote}{*}
\footnotetext{Equal contribution.}
\endgroup

\begin{abstract}

3D representations are fundamental to scene rendering, understanding, and interaction. Recent approaches, such as 3D Gaussian Splatting and Neural Radiance Fields, achieve impressive photorealistic novel-view synthesis, but lack the ability to easily decompose scene elements into a few primitives, requiring additional segmentation or grouping for object-level manipulation. We present MLP-Splatting, a method that enables scene decomposition via a few expressive light-field primitives while providing photorealistic novel-view synthesis.

MLP-Splatting models each primitive as an independent compact MLP with localized spatial support that predicts radiance and opacity. In contrast to low-level Gaussian primitives or a single global radiance field, our neural primitives provide greater expressive capacity while remaining spatially localized. Rendering is performed through efficient sparse volumetric compositing over ray–primitive interactions.

Our primitives are supervised using RGB supervision alone, which yields primitives that represent local scene regions often corresponding to objects or object parts, enabling interactive object-level editing without segmentation masks by selecting a handful of primitives. Our method, augmented with optional semantic feature distillation, enables open-vocabulary scene interaction and open-set instant segmentation. Compared to state-of-the-art methods, we achieve substantially lower memory usage (1/15$\times$) and faster rendering (3$\times$), as we show in our experiments compared to semantic 3DGS methods. Project Page: \url{https://shinjeongkim.com/mlp-splatting}

  \keywords{3D Reconstruction \and Radiance Fields \and Novel View Synthesis}
\end{abstract}



%

\section{Introduction}
Reconstructing a 3D scene remains a fundamental and enduring problem in computer vision, closely tied to the scene representation. Different application goals have led to diverse primitives, ranging from discrete point clouds featuring ease of obtaining and processing, point-based primitives such as 3D Gaussian Splatting (3DGS)~\cite{Kerbl2023GaussianSplatting} for editability, mesh representations~\cite{lorensen1998marching} tailored for watertight modeling and physical simulation, and neural radiance fields (NeRF)~\cite{Mildenhall2020NeRF} for photorealistic novel-view synthesis.

\begin{figure}[t]
    \centering
    \begin{subfigure}[t]{0.3\linewidth}
        \centering
        \includegraphics[width=\linewidth]{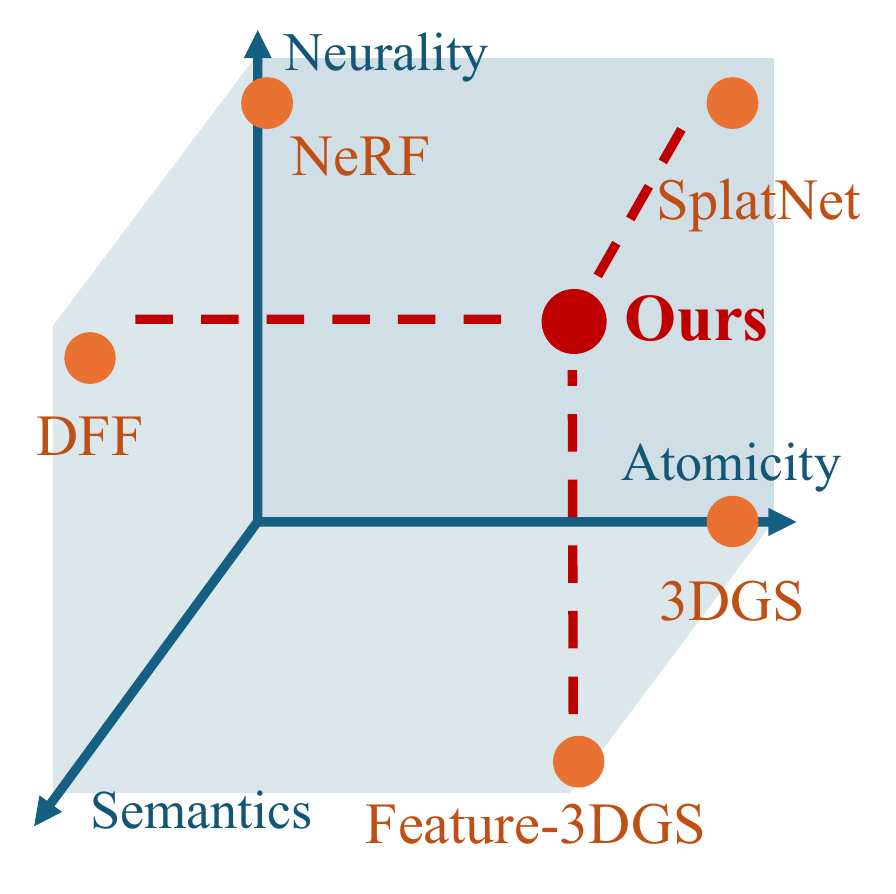}
        \caption{}
        \label{fig:3axis}
    \end{subfigure}
    \hfill
    \begin{subfigure}[t]{0.68\linewidth}
        \centering
        \includegraphics[width=\linewidth]{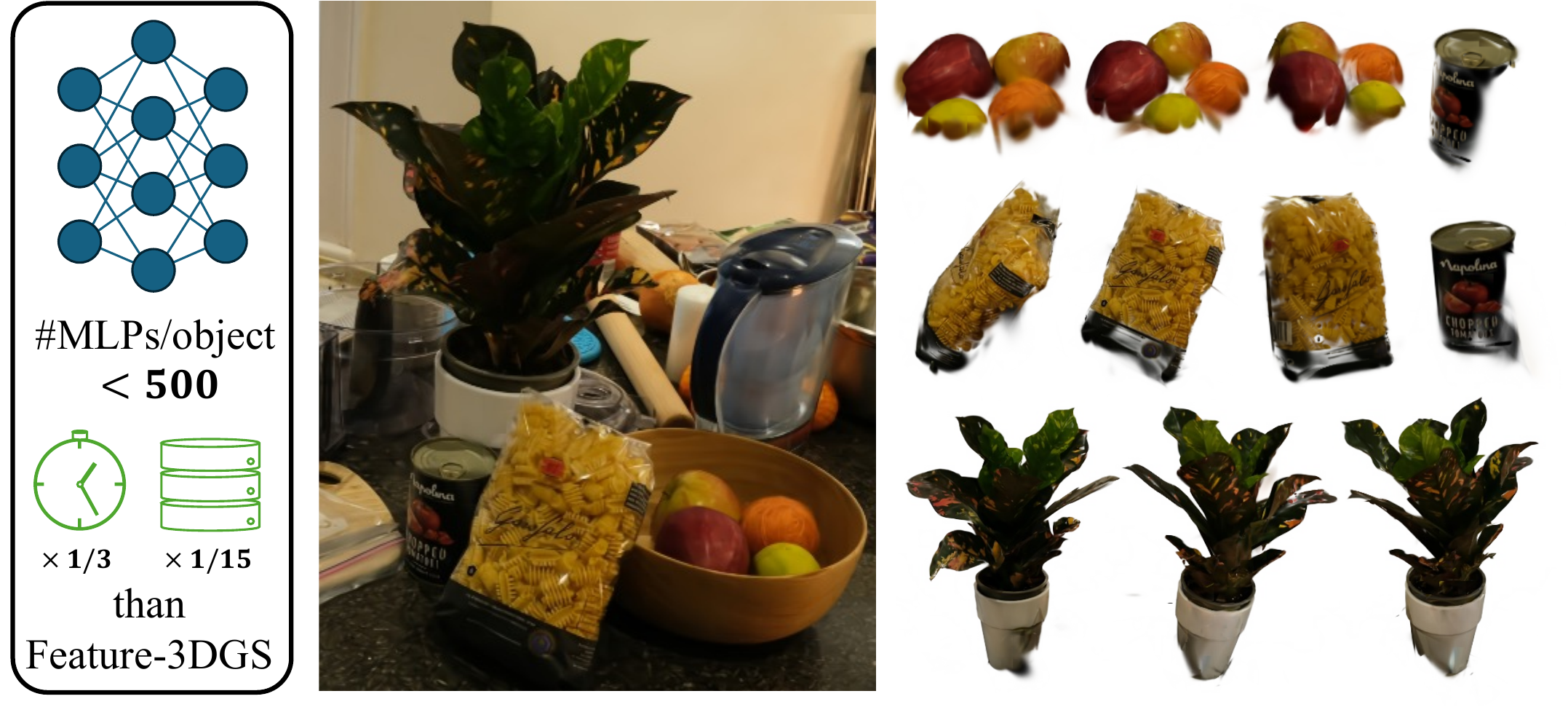}
        \caption{}
        \label{fig:teaser}
    \end{subfigure}
    
    \caption{MLP-Splatting Overview. \textbf{(a)} Conceptual comparison of 3D scene representations along neurality, atomicity, and semantic awareness. Our method combines the expressiveness of neural fields with the atomicity of primitives, enabling representations that better align with semantic structure. \textbf{(b)} Example scene and learned primitives corresponding to objects or object parts.}
    \label{fig:overview}
    \vspace{-0.4cm}
\end{figure}

From another perspective, scene representations differ fundamentally in the level of abstraction of their primitives. Although many existing representations operate at the level of low-level geometric or radiometric elements, a natural question arises: should higher-level entities, such as objects or object parts, serve as the basic functional units of the representation?

In this work, we revisit this question from an object-level perspective. Most existing scene representations are not inherently object-centric: their basic primitives correspond to low-level geometric~\cite{lorensen1998marching, govindarajan2025radiant, mai2025ever, cheng2025logs, Kerbl2023GaussianSplatting} or radiometric elements~\cite{Mildenhall2020NeRF, Mueller2022InstantNGP}, while objects emerge only as derived groupings or semantic overlays. By object-centric, we mean that the representation's fundamental functional units correspond to objects or meaningful parts of objects. 
Although recent advances have begun to incorporate object awareness into neural scene representations, this awareness is typically introduced by distilling semantic knowledge onto pre-existing representations, such as implicit radiance fields in NeRF-based methods~\cite{kobayashi2022decomposing, zhi2021place, Kundu2022PanopticNeuralFields, kim2024garfield} or 3DGS~\cite{Ye2024GaussianGrouping, zhou2024feature, cen2025tackling}. In both cases, semantics are imposed on representations originally designed for photometric realism rather than arising from representations whose primitive units are themselves object- or part-level entities.

Yet we argue that the structure of real scenes itself naturally gives rise to object- and part-level organization, and that a scene representation can be designed to capture this structure directly. Many object-level properties in 3D space—such as spatial occupancy, color, and density—exhibit strong coherence within an object, while sharp transitions are concentrated primarily near object boundaries, despite potentially high intrinsic complexity. We therefore introduce MLP-Splatting, which assigns independent MLP-based primitives to spatially significant parts of objects. These primitives collectively form a scene-level radiance field for rendering. Each primitive is continuous and differentiable, shares no parameters with the others, and in principle has unbounded expressive capacity, making it well-suited to modeling coherent object-level structure.

To showcase this object-centric property more clearly, we optionally augment the representation with semantic guidance through feature distillation~\cite{kobayashi2022decomposing, zhou2024feature}. This allows the learned primitives and their associated embeddings to support tasks such as segmentation and language-guided editing. With this simple supervision, MLP-Splatting achieves semantic metrics comparable to Feature-3DGS~\cite{zhou2024feature}, despite exhibiting orders-of-magnitude differences in model memory due to the nature of the representation. We further analyze the theoretical memory bound required to reach the same photometric error.

In summary, our contributions are fourfold:
\begin{enumerate}
\item \textit{Object-centric neural scene representation.}
We introduce \emph{MLP-Splatting}, a neural scene representation in which each primitive is modeled as an independent perceptron. This formulation treats objects or parts of objects as the fundamental functional units of the representation.

\item \textit{Emergent object-level structure.}
Under pure RGB supervision, primitives naturally specialize to coherent scene regions, enabling object-level editing \textbf{without segmentation}. With feature distillation, we achieve state-of-the-art rendering quality and competitive semantics in the Replica~\cite{straub2019replica} and ScanNet~\cite{Dai_2017_CVPR} datasets.

\item \textit{Efficient neural-primitive rendering.}
Our scene paradigm and efficient tile-based compositing pipeline enable real-time rendering at \textbf{$\sim$8} FPS (vs.~\textbf{$\sim$3} FPS in~\cite{zhou2024feature}) with only \textbf{32 MB} of memory (vs.~\textbf{630 MB} in~\cite{zhou2024feature}).

\item \textit{Theoretical capacity analysis.}
We derive the \emph{Equal-Quality Memory Law}, showing that achieving the same pixel error requires \textbf{$\Omega(\varepsilon^{-2})$} parameters for Gaussian splatting but only \textbf{$\Theta(\varepsilon^{-3/s})$} for MLP-Splatting, where $s$ is the local smoothness order of the scenes.

\end{enumerate}

\section{Related Work}

\subsection{3D Scene Representations} 
NeRFs~\cite{Mildenhall2020NeRF} represent a scene as a continuous function that maps 3D positions and viewing directions to densities and radiance, optimized by differentiating the volumetric rendering of posed images. A large body of work has since improved the fidelity--efficiency trade-off~\cite{Barron_2021_ICCV, Reiser_2021_ICCV, Fridovich-Keil_2022_CVPR, Mueller2022InstantNGP, Barron2023ZipNeRF} and broadened the operating regime~\cite{Martin-Brualla_2021_CVPR, Barron2022MipNeRF360}. Instant-NGP~\cite{Mueller2022InstantNGP, taher2024fit} introduces multiresolution hash encodings and fused CUDA kernels to accelerate training and inference drastically. For large, unbounded outdoor captures, mip-NeRF~360~\cite{Barron2022MipNeRF360} addresses aliasing and scale imbalance via cone-based integration and scene parameterization, while Zip-NeRF~\cite{Barron2023ZipNeRF} further improves anti-aliasing by combining rendering/signal-processing ideas with grid-based representations.

In parallel, methods based on optimizable explicit primitives, such as 3D/2D Gaussian Splatting~\cite{Kerbl2023GaussianSplatting, huang20242d}, ellipsoid rendering~\cite{mai2025ever, nguyensplat}, polyhedral element rendering~\cite{govindarajan2025radiant, lutzow2025linprim}, etc., replace dense ray marching on a volumetric neural field with a fast visibility-aware rasterization or efficient closed-form volumetric rendering on a set of independently-optimizable unit geometric primitives, achieving high-quality real-time rendering. 
Our work seeks to combine the representational flexibility of neural fields with the computational efficiency of splatting-style pipelines, while pursuing a distinct representational goal: making object- and part-level structure emerge naturally from the scene representation itself.

\subsection{Object-level Scene Representations} 
A growing line of research seeks representations that explicitly expose objects for editing, tracking, and compositional reasoning. iLabel~\cite{zhi2022ilabel} revealed that a single MLP trained for scene representation automatically has a certain level of object awareness, such that 2D semantic masks can be predicted by a very small number of interactive annotations.
Various works have explored object-centric scene representations that model a scene as a composition of per-object components~\cite{Guo2020ObjectCentric,Yang2021ObjectCompositionalNeRF,kong2023vmap,Niemeyer_2021_CVPR,Wu_2023_ICCV}. Such decomposition enables object-level manipulations such as re-arrangement, relighting, editable rendering, and tracking~\cite{Guo2020ObjectCentric}.

Within radiance-field methods, object-compositional formulations introduce an explicit object branch, often conditioned on learnable codes, to support editable scene rendering~\cite{Yang2021ObjectCompositionalNeRF}, while real-time object-level mapping methods model each instance with a separate compact neural field to build object-level neural 3D scene representations online~\cite{kong2023vmap}.
In Panoptic Neural Fields~\cite{Kundu2022PanopticNeuralFields}, dynamic scenes are decomposed into object instances, each of which is represented by an instance-specific MLP, and background regions, which are modeled separately. \cite{Kundu2022PanopticNeuralFields} relies on external pose estimation, tracking, and 2D panoptic predictions as pseudo-supervision.

For 3DGS, most extensions remain Gaussian-centric, with object-level structure introduced through grouping or auxiliary embeddings attached to individual Gaussians. Gaussian Grouping~\cite{Ye2024GaussianGrouping}, for example, learns per-Gaussian-grouping features guided by multiview mask predictions from a promptable segmentation model such as SAM~\cite{Kirillov2023SAM}, allowing open-world 3D segmentation and editing. These works demonstrate the value of object awareness; however, object-level entities are typically recovered through supervision, clustering, or auxiliary feature fields rather than serving as intrinsic functional primitives of the representation. By contrast, our approach explicitly treats objects and parts as independent functional units of scene representation, each learned by a compact MLP, and composes these local models to render the scene.

\subsection{Open World Semantic Understanding}
Open-world semantics aims to recognize and localize concepts beyond a fixed label set, often by leveraging vision and vision-language foundation models. CLIP~\cite{Radford2021CLIP} provides a shared vision-language embedding space that supports zero-shot recognition and retrieval via text prompts. Promptable segmentation models such as SAM~\cite{Kirillov2023SAM} enable class-agnostic mask generation at scale, while open-set object detectors such as Grounding~DINO~\cite{liu2024grounding} perform phrase grounding and text-conditioned detection by integrating language into transformer-based detection. Prompt-based segmentation methods such as CLIPSeg~\cite{Luddecke2022CLIPSeg} further bridge text or image prompts to pixel-level masks.

Recent work has brought open-vocabulary, open-set, and prompt-driven semantics into 3D and neural rendering by lifting foundation-model signals into 3D-consistent fields. OpenScene~\cite{Peng2023OpenScene} predicts dense 3D features aligned with text and image embeddings to support open-vocabulary 3D understanding. LERF~\cite{Kerr2023LERF} embeds language features into radiance fields by supervising multi-view-consistent CLIP embeddings along rays, enabling open-ended language queries in 3D. GAR-Field~\cite{kim2024garfield} learns hierarchical 3D groups from SAM masks for open-ended and interactive scene-level grouping. In the Gaussian-splatting family, LangSplat~\cite{Qin2024LangSplat} constructs a 3D language field over Gaussians to support efficient open-vocabulary querying. Unlike these methods, we represent scenes with independent MLPs, each designed to represent an object or part of it, so semantics attach directly to the representation's native granularity.

\section{MLP-Splatting}
\begin{figure}[t]
    \centering
    \includegraphics[width=1\linewidth]{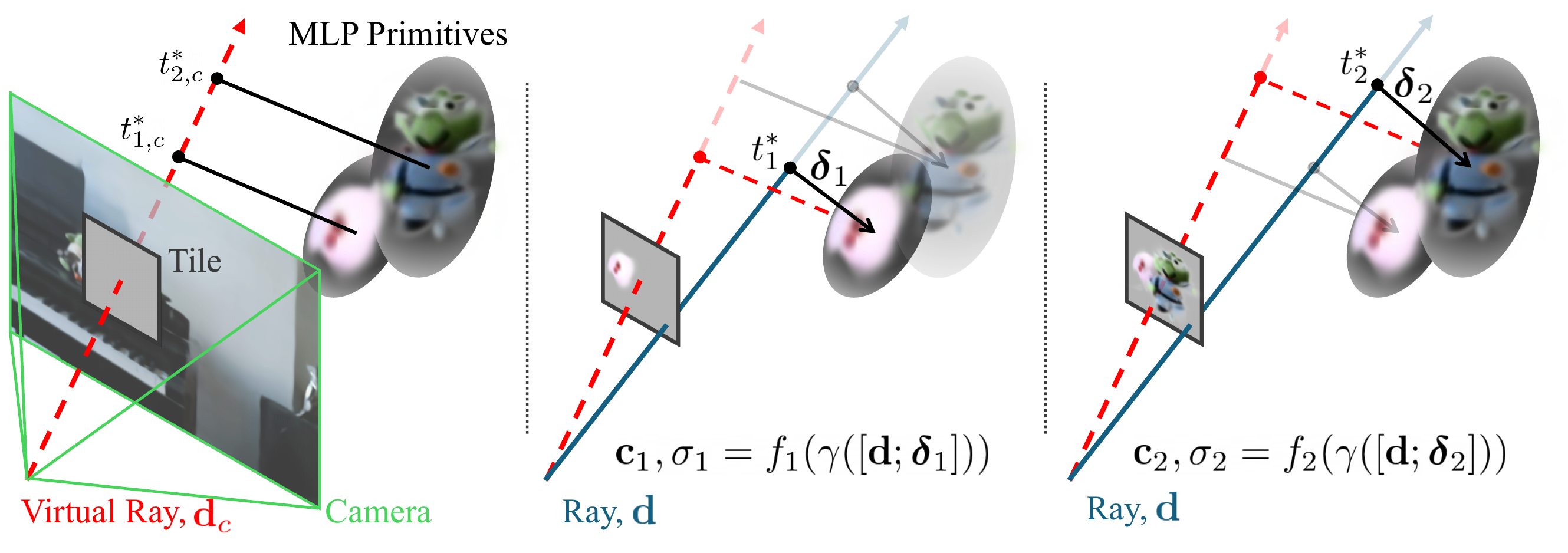}
    \caption{Procedure of MLP-Splatting pipeline. \textbf{(Left)} For each tile, sorting for MLPs is performed per tile, based on the depth calculated with respect to a virtual ray passing through the center of the tile; only MLPs for which the support region is splatted to the image plane that overlap with the tile are considered. \textbf{(Center)} Based on the sorted order, we perform the MLP weight loading and forward pass cooperatively per GPU thread block responsible for every pixel in the tile to obtain the color and density. \textbf{(Right)} Once finishing the MLP forward pass, the tile's threads proceed to alpha blend and move to the next MLP.}
    \label{fig:method}
    \vspace{-0.5cm}
\end{figure}

We represent a scene as a collection of independent primitives, each modeled by a compact MLP that predicts radiance and opacity within a local region. 
Altogether, these primitives form a piecewise-continuous approximation of the light-field. 
Given a camera ray, rendering reduces to evaluating the interaction between the ray and primitives in the vicinity, followed by standard volumetric compositing.

\subsection{Neural Primitive Representation}

Each primitive $i$ is parameterized by a center $\boldsymbol{\mu}_i \in \mathbb{R}^3$, a scale vector $\mathbf{s}_i \in \mathbb{R}^3$, and a rotation represented by a unit quaternion $\mathbf{q}_i \in S^3$, where $S^3$ denotes the unit 3-sphere in $\mathbb{R}^4$.
The scale and rotation together define an anisotropic Gaussian support region that models a soft spatial boundary in $\mathbb{R}^3$.
Associated with each primitive is an independent neural function 
$f_i(\cdot;\theta_i)$ mapping a 6D input to $(\mathbf{c}_i, \sigma_i)$, where 
$\mathbf{c}_i \in [0,1]^3$ denotes RGB radiance and 
$\sigma_i \ge 0$ denotes opacity density. 
No parameters are shared across primitives, allowing discontinuities between objects while preserving intra-object continuity.

\subsection{Ray Modeling and Interaction with Primitives}

Let a camera ray be defined as 
$\mathbf{r}(t) = \mathbf{o} + t \mathbf{d}$, 
where $\mathbf{o}$ is the origin and $\mathbf{d}$ is the normalized direction.
For the primitive $i$, we compute an \emph{optimal soft contact point} by minimizing a Gaussian-weighted distance from the ray to the primitive center:
\begin{equation}
t_i^\star 
= \argmin_{t\ge 0}\;
\big\|\mathbf{o}+t\mathbf{d}-\boldsymbol{\mu}_i\big\|_{\mathbf{\Lambda}_i}^2 \label{eq:tsolve}
~,
\end{equation}
where $
\mathbf{\Lambda}_i=\mathbf{R}(\mathbf{q}_i)\,\mathrm{diag}({s_{ix}^{2}}, {s_{iy}^{2}}, {s_{iz}^{2}})^{-1}\,\mathbf{R}(\mathbf{q}_i)^\top
\in \mathbb{R}^{3\times 3}$ ($\mathbf{R}(\cdot)$ is a rotation matrix of a given quaternion) is a positive definite matrix induced by the primitive's anisotropic Gaussian support.
This objective admits a closed-form solution:
\begin{equation}
t_i^\star \;=\;\max\!\left(0,\;\frac{\mathbf{d}^\top \mathbf{\Lambda}_i(\boldsymbol{\mu}_i-\mathbf{o})}{\mathbf{d}^\top \mathbf{\Lambda}_i \mathbf{d}}\right),
\qquad
\mathbf{p}_i^\star \;=\; \mathbf{o}+t_i^\star \mathbf{d}.\label{eq:intersection}
\end{equation}
We define the residual vector from the soft contact point to the primitive center as:
\begin{equation}
\boldsymbol{\delta}_i \;=\; \boldsymbol{\mu}_i - \mathbf{p}_i^\star,
\end{equation}
which, together with the ray direction, establishes a one-to-one mapping between viewing configuration and primitive-local spatial interaction.
The 6D geometric input to the primitive MLP is constructed as:
\begin{equation}
\mathbf{x}_i = [\mathbf{d} \, ; \, \boldsymbol{\delta}_i] \in \mathbb{R}^6,
\end{equation}
coupling directional dependence (viewing direction) with local geometric context at the contact location. 
This directly connects our formulation to the light-field parameterization, in which radiance depends jointly on viewing direction and spatial location, expressed here in primitive-centered coordinates.

To enhance high-frequency expressivity, we apply multi-resolution positional encoding to each scalar component of $\mathbf{x}_i$. 
For a scalar $p$, the encoding is defined as:
\begin{equation}
\gamma(p) =
\left(
\sin(2^0 \pi p), \cos(2^0 \pi p), \dots,
\sin(2^{L-1} \pi p), \cos(2^{L-1} \pi p)
\right),
\end{equation}
and extended component-wise to vectors. 
The encoded feature $\gamma(\mathbf{x}_i)$ is then used as the actual input to the primitive MLP.

The primitive predicts raw outputs
\(
(\tilde{\mathbf{c}}_i, \tilde{\sigma}_i) = f_i(\gamma(\mathbf{x}_i)),
\)
which are mapped to valid ranges via
\(
\mathbf{c}_i = \mathrm{sigmoid}(\tilde{\mathbf{c}}_i)
 \)
and
\(
\sigma_i = \mathrm{softplus}(\tilde{\sigma}_i).
\)
To enforce spatial locality, opacity is further modulated by a Gaussian falloff induced by the primitive’s anisotropic support.

\subsection{Rendering and Compositing}

Sorting by \cref{eq:intersection} leads to a geometrically correct order of primitives, requiring the thread responsible for each pixel to independently sort all primitives whose support region splatted to the image plane intersect the ray for the pixel. As an alternative for efficiency, we can sort with a $t_{i, c}^*$ obtained by solving \cref{eq:tsolve} with $\mathbf{d}_c$, which is a virtual ray direction passing through the center of the tile the pixel belongs to. The pixel plane is divided into square tiles, as in \cite{Kerbl2023GaussianSplatting}.

For a ray intersecting $N$ primitives sorted by increasing $t_i^\star$, 
the rendered color is computed by front-to-back compositing:
\begin{equation}
\mathbf{C}
=
\sum_{i=1}^{N}
T_i \alpha_i \mathbf{c}_i,
\end{equation}
with
\(
\alpha_i = 1 - \exp(-\sigma_i \delta_i),
~
T_i = \exp\!\left(-\sum_{j=1}^{i-1} \sigma_j \delta_j\right).
\)

This formulation resembles not only alpha blending of 3DGS, but also the discrete approximation of the volume rendering integral used in NeRF, while replacing dense sampling with sparse per-primitive interactions.

To enable efficient full-resolution rendering, primitives are rasterized into screen-space tiles using conservative overlap tests based on their projected support regions. 
Within each tile, only spatially relevant primitives are evaluated, allowing scalable rendering without global ray marching.

\subsection{Feature Embedding}\label{sec:feature_embedding}
We attach semantic features to each primitive using the same ray--primitive interaction used for radiance.
Let $\boldsymbol{\delta}_i=\boldsymbol{\mu}_i-\mathbf{p}_i^\star$ and
$\mathbf{u}_i=\mathrm{clip}_{[-1,1]^3}\!\big(\mathbf{R}(\mathbf{q}_i)^\top\boldsymbol{\delta}_i \oslash \mathbf{s}_i\big)$
be the scale-normalized local offset.
This local offset enables regions with similar appearance but different local orientation to be separated in the semantic space, as follows:
\begin{equation}
g_{ik}=\mathrm{softmax}_k\!\big(\mathbf{a}_k^\top \mathbf{u}_i\big),\qquad
\mathbf{f}_i=\sum_{k=1}^{4} g_{ik}\mathbf{e}_{ik},
\end{equation}
where $\mathbf{e}_{ik}\in\mathbb{R}^{C_f}$ are learnable slot embeddings, and $\mathbf{a}_{k}$ are directional conditioning defined by a canonical tetrahedral frame
$\mathcal{A}=\{\mathbf a_k\}_{k=1}^4\subset\mathbb S^2$.
This is the minimal affinely independent basis in 3D and yields an isotropic angular partition. This per-primitive feature embedding is rendered with the same volumetric weights as color:
$\mathbf{F}=\sum_{i=1}^{N}T_i\alpha_i\,\mathbf{f}_i$.

\subsection{Optimization}\label{sec:optimization}

Given ground-truth image $I$ with color channels $\mathbf{C}^\ast$, we supervise rendered colors $\mathbf{C}$ using a photometric objective,
\begin{equation}
\mathcal{L}_{rgb}
=
(1-\lambda)\,\|\mathbf{C}-\mathbf{C}^\ast\|_1
+
\lambda\,\big(1-\mathrm{SSIM}(\mathbf{C},\mathbf{C}^\ast)\big),
\end{equation}
which balances per-pixel fidelity and structural similarity, following common practice in 3D Gaussian Splatting.

When jointly training semantics, we align the rendered feature map with the latent embedding from a vision foundation model:
\begin{equation}
\mathcal{L}
= \mathcal{L}_{rgb} + \mathcal{L}_{f},  \qquad
\mathcal{L}_{f}
=
\left\|\mathbf{F}-\mathbf{F}_{t}\right\|_{1}, 
\end{equation}
where $\mathbf{F}_{t}$ denotes the teacher latent embedding of image $I$.

\section{Experiments}

\subsection{Implementation Details}
The 6D input is positional encoded into a 72-dimensional feature vector. Each MLP architecture is $72$–$32$–$32$–$4$, producing RGB color and raw density. The feature embedding has a dimension of 512, and its rendered feature is directly guided by 2D semantic features.
We implement a custom tile-based CUDA renderer with a $8\times8$ tile size, which is critical for efficient rendering. For each tile, we first perform an AABB--tile-frustum intersection test to identify candidate primitives, and then apply a block-level radix sort using their depths along a representative virtual ray passing through the tile center. This virtual ray is used only to determine the primitive ordering within the tile and is independent of actual per-pixel rendering rays.

We initialize the MLP primitives using an octree-based sampling of the COLMAP~\cite{schoenberger2016sfm} point cloud, ensuring sufficient spatial coverage and geometric fidelity. During training, the primitives in the vicinity naturally compete with each other: some specialize in representing parts of objects, while others gradually vanish as their scale and opacity approach zero. Exploiting this property to keep the proposed method simple to analyze, no explicit density control mechanism is employed. 

Optimization is done via the Adam optimizer~\cite{zhang2018improved} with learning rates of $10^{-2}$ for log-scales, $10^{-4}$ for positions, and $10^{-3}$ for all other parameters. 
The model is trained for 60{,}000 iterations with a single NVIDIA RTX 6000 Ada GPU.
All evaluations are conducted on an NVIDIA RTX 4090 GPU, following the evaluation setting of~\cite{zhou2024feature}.

\begin{figure}[t]
    \centering
    \includegraphics[width=1\linewidth]{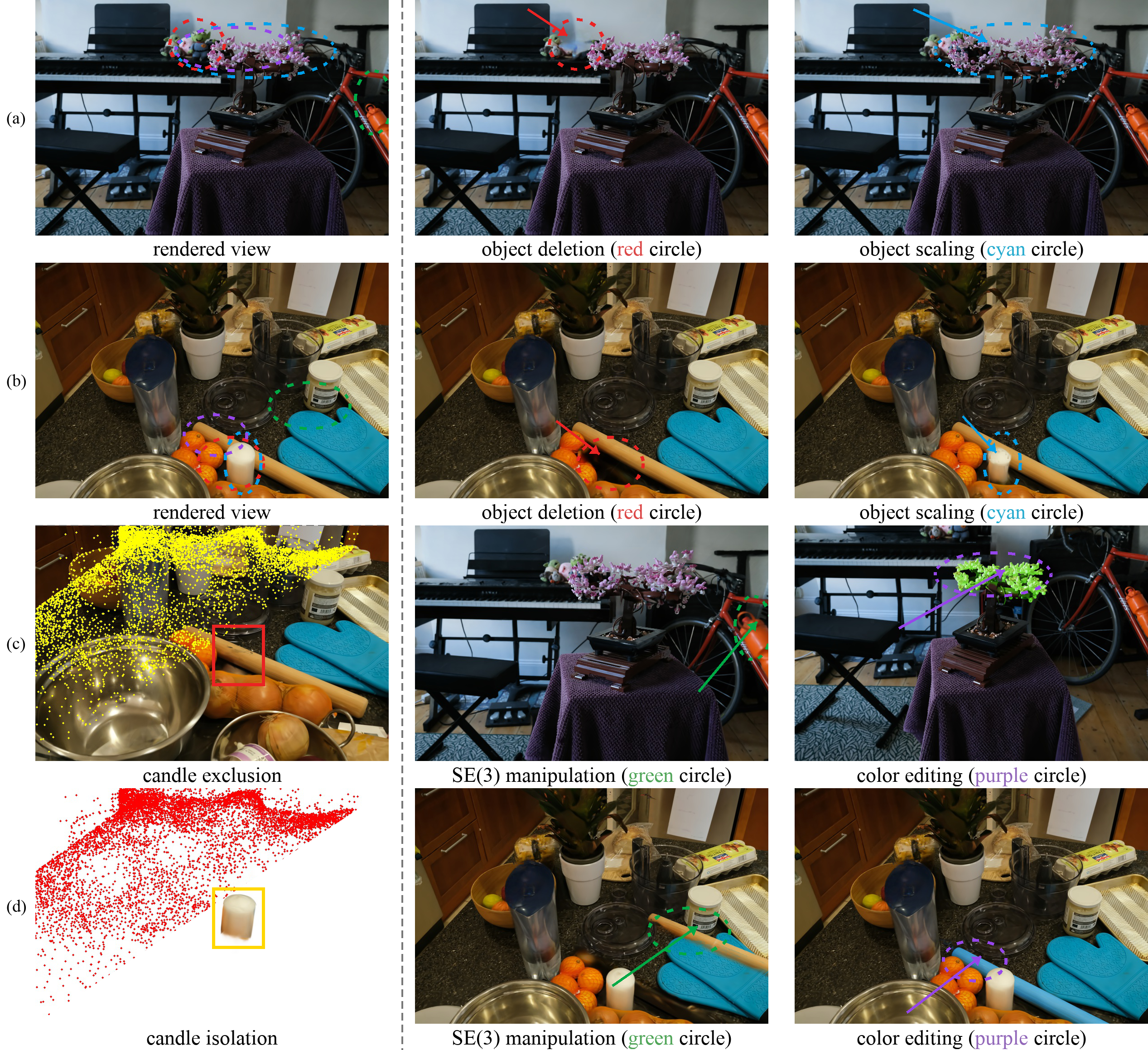}
    \caption{Interactive scene editing demonstrations. In row (a), we show object scaling demonstrated on the bonsai tree. In row (b), we show the candle's object deletion. In row (c), color editing is shown on the bonsai tree. In row (d), we show a rigid body transformation applied to the rolling pin primitives and how it renders correctly from the new location.}
    \label{fig:rgb_edit}
    \vspace{-0.5cm}
\end{figure}

\paragraph{Evaluation metrics.}
We report two groups of metrics following the Replica protocol of Feature-3DGS~\cite{zhou2024feature}.
\textbf{Photometric quality} is evaluated on held-out novel views using PSNR, SSIM, and LPIPS \cite{Kerbl2023GaussianSplatting}.
\textbf{Semantic quality} is evaluated by rendering per-pixel semantic predictions and computing mean intersection-over-union (mIoU) and pixel accuracy:
$\mathrm{IoU}_c=\frac{\mathrm{TP}_c}{\mathrm{TP}_c+\mathrm{FP}_c+\mathrm{FN}_c}$,
$\mathrm{mIoU}=\frac{1}{|\mathcal C|}\sum_{c\in\mathcal C}\mathrm{IoU}_c$,
and $\mathrm{Acc}=\frac{1}{|\Omega|}\sum_{p\in\Omega}\mathbf{1}\!\left[\hat y_p=y_p\right]$,
where $\mathcal C$ is the evaluated label set and $\Omega$ denotes image pixels.

\paragraph{Baselines.}
We compare against (i) NeRF-DFF~\cite{kobayashi2022decomposing}, which distills CLIP-LSeg features into a NeRF-style feature field, and
(ii) Feature-3DGS~\cite{zhou2024feature}, which distills the same teacher features into 3D Gaussian primitives with an efficient rasterization pipeline.
Both baselines are evaluated on the Replica~\cite{straub2019replica} and ScanNet~\cite{Dai_2017_CVPR} datasets, where sequences are sampled following~\cite{zhou2024feature}, and we report the same semantic (mIoU/Acc) and photometric (PSNR/SSIM/LPIPS) metrics for direct comparison.


\begin{table}[tb]
\caption{Novel-view synthesis evaluation on Replica and ScanNet datasets.}
\label{tab:fig1_quality}
\centering
\renewcommand{\arraystretch}{1.08}
\setlength{\tabcolsep}{4pt}
\begin{tabularx}{\textwidth}{@{}
>{\raggedright\arraybackslash}m{2.55cm}|
>{\hsize=1.2\hsize\centering\arraybackslash}X
>{\hsize=0.8\hsize\centering\arraybackslash}X
|
>{\hsize=1.2\hsize\centering\arraybackslash}X
>{\hsize=0.8\hsize\centering\arraybackslash}X
@{}}
\noalign{\hrule height 1.1pt}
Dataset & \multicolumn{2}{c|}{Replica} & \multicolumn{2}{c@{}}{ScanNet} \\
[-0.95ex]
\rule{2.5cm}{0.4pt}
& \multicolumn{1}{c}{\makebox[0pt]{\rule{2.5cm}{0.4pt}}}
& \multicolumn{1}{c|}{\makebox[0pt]{\rule{1.5cm}{0.4pt}}}
& \multicolumn{1}{c}{\hspace{0.12cm}\makebox[0pt]{\rule{2.5cm}{0.4pt}}}
& \multicolumn{1}{c@{}}{\makebox[0pt]{\rule{1.5cm}{0.4pt}}} \\
[-0.95ex]
Metric|Method & \mbox{Feature-3DGS} & Ours & \mbox{Feature-3DGS} & Ours \\
\hline
PSNR $\uparrow$    & 36.18 & \topone{36.25} & 23.32 & \topone{25.35} \\
SSIM $\uparrow$    & 0.964 & \topone{0.971} & 0.817 & \topone{0.830} \\
LPIPS $\downarrow$ & \topone{0.079} & 0.090 & \topone{0.362} & 0.403 \\
\noalign{\hrule height 1.1pt}
\end{tabularx}
\end{table}

\subsection{MLP as an Implicit Primitive for Explicit Representation}
Although the primary goal of MLP-Splatting is to construct an object-centric scene representation, it retains strong photometric rendering performance. 
Despite decomposing the scene into independent neural primitives, the resulting model achieves faithful view synthesis comparable to scene-centric approaches.

Statistics shown in \cref{tab:fig1_quality} illustrate that our formulation preserves high-frequency appearance details 
while maintaining spatial coherence across viewpoints. Through a comparison of the modeling behaviors of 3DGS and MLP-Splatting, as illustrated in the second column of \cref{fig:lseg_semantics}, we observe clear differences.
3DGS employs a large number of Gaussian ellipsoids to represent the fine structures of the blinds, yet the learned result ultimately collapses into blurry, coarse regions.
In contrast, MLP-Splatting successfully represents each slat with one or a few elongated MLP primitives, producing the correct striped structure. This demonstrates that modeling objects as independent functional units 
does not compromise photometric fidelity; rather, it offers a structurally aligned alternative to monolithic scene-wide fields.

We further analyze the joint training of appearance and semantics. In \cref{tab:mlp_3dgs_dualcol_top5}, we present a side-by-side comparison between MLP-Splatting and Feature-3DGS in terms of the parameter count required to represent the same object in the same scene. For each scene, we select the top-5 object groups (covering about $95\%$ of all primitives), and estimate their counts using a softmax-based threshold ($\tau=0,15$). We observe that MLP-Splatting typically requires fewer than $1\%$ of the Gaussian ellipsoids required by Feature-3DGS. A single MLP has fewer parameters than ten semantic Gaussian ellipsoids. This demonstrates an order-of-magnitude gap in representational capacity.

\begin{table}[tb]
\caption{Top-5 object counts per scene. MLP and 3DGS are shown side-by-side for direct comparison.}
\label{tab:mlp_3dgs_dualcol_top5}
\centering
\scriptsize
\begin{tabular*}{\columnwidth}{@{\extracolsep{\fill}}l*{7}{cc}}
\toprule
\multirow{2}{*}{Scene} &
\multicolumn{2}{c}{floor} &
\multicolumn{2}{c}{rug} &
\multicolumn{2}{c}{table} &
\multicolumn{2}{c}{chair} &
\multicolumn{2}{c}{bag} &
\multicolumn{2}{c}{wall} &
\multicolumn{2}{c}{Total} \\
\cmidrule(lr){2-3}\cmidrule(lr){4-5}\cmidrule(lr){6-7}\cmidrule(lr){8-9}\cmidrule(lr){10-11}\cmidrule(lr){12-13}\cmidrule(lr){14-15}
& MLP & 3DGS & MLP & 3DGS & MLP & 3DGS & MLP & 3DGS & MLP & 3DGS & MLP & 3DGS & MLP & 3DGS \\
\midrule
room0   & 253 & 48490 & 485 & 21621 & \phantom{1}511 & \phantom{1}98197 & 782 & 43149 & 76 & 17484 & -   & -      & 2107 & 228941 \\
room1   & \phantom{1}91 & 21525 & 116 & 32751 & 1961 & 332611 & -   & -     & 44 & 10389 & 428 & 140025 & 2640 & 537301 \\
office3 & 174 & 50671 & -   & -     & \phantom{1}198 & \phantom{1}24242 & 290 & 36867 & 49 & \phantom{1}9102 & 351 & \phantom{1}62102 & 1062 & 182984 \\
office4 & 171 & 29803 & -   & -     & \phantom{1}178 & \phantom{1}35771 & 203 & 49684 & 38 & \phantom{1}9425 & 296 & \phantom{1}51922 & \phantom{1}886 & 176605 \\
\bottomrule
\end{tabular*}
\vspace{-0.5cm}
\end{table}

\subsection{Photometry-Driven Interactive Scene Editing}
Notably, even when trained solely with RGB supervision, 
MLP-Splatting exhibits emergent ``object-level'' structure. 
Without any segmentation masks, language priors, or identity supervision, 
each primitive naturally specializes to represent a single object 
or a coherent part of an object. For example, in the lower-left of \cref{fig:rgb_edit}, the candle object is jointly represented by 11 MLP primitives. They are largely distributed along the edges of the cylindrical structure. By selecting these primitives and masking them out, the object can be removed from the scene. 

We attribute this emergent behavior to the inductive bias of our representation. Since primitives are independent functional predictors with localized spatial support, optimization favors configurations in which each MLP exclusively models a coherent and confined subset of the scene. As a result, semantic grouping naturally emerges from photometric reconstruction alone.

This emergent object-level decomposition enables interactive scene editing directly at the primitive level with ease. 
By manipulating or removing selected primitives, 
users can edit individual objects without retraining or auxiliary segmentation models. \cref{fig:rgb_edit} demonstrates object removal, recoloring, and partial editing achieved purely through primitive-level operations. 

The mathematical implementation is tightly coupled with the MLP primitives. Each primitive has an explicit spatial position and an orientation parameterized by a quaternion. 
Therefore, rigid-body manipulation can be implemented by applying an $\mathrm{SE}(3)$ transformation to the poses of a single primitive or a selected group of primitives, updating their positions and orientations consistently.
Furthermore, object-level scaling can be achieved by modifying the scale parameters of primitives.

At a deeper level, we can directly operate on the MLP's parameters. In particular, global color and opacity adjustments can be implemented by modifying the bias of the final layer. 
The color transformations shown in \cref{fig:rgb_edit} are obtained by adding small offsets along target color directions in the last bias vectors of the MLPs.
More fine-grained texture manipulation can be achieved by post-optimizing the internal MLP parameters, similar to DFF~\cite{kobayashi2022decomposing}.

\subsection{Multi-view Consistent 3D Segmentation}

As discussed in the previous section, the proposed MLP primitives enable easy editing on objects, stemming from the property that each learned radiance field is typically constrained to object boundaries. To better demonstrate this characteristic, arising from the combination of both explicit and implicit scene representation properties, we augment our primitives with semantic embeddings and conduct language-guided editing and segment-anything experiments in the style of \cite{zhou2024feature}. Through experiments in this section, we show that MLPs can still effectively preserve fine-grained semantic details despite covering volumes that are significantly larger than those of conventional primitives.

\subsubsection{Language-guided Segmentation}
\begin{figure}[t]
    \centering
    \includegraphics[width=1\linewidth]{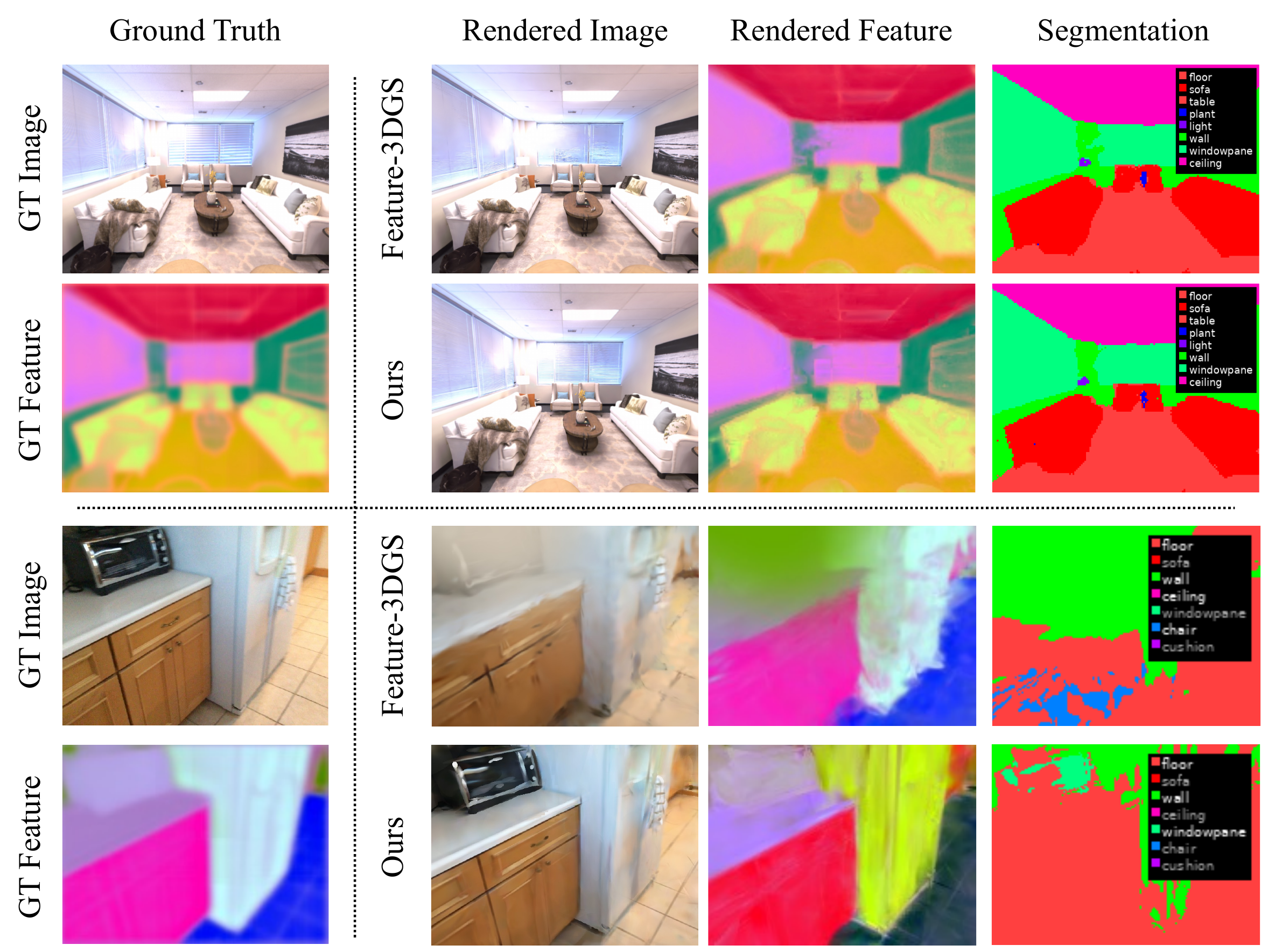}
    \caption{3D semantic segmentation with LSeg-guided embeddings rendered on novel views. The top two rows are from the Replica dataset~\cite{straub2019replica}, and the bottom two rows are from the ScanNet~\cite{Dai_2017_CVPR} dataset.}
    \label{fig:lseg_semantics}
    \vspace{-0.5cm}
\end{figure}




We follow the approach of \cite{zhou2024feature}, which distills the feature map of image encoder of LSeg-CLIP into embeddings that are independent for each primitive, and learns to represent semantic feature field. These independent embeddings are trained jointly with our primitives under supervision from the feature map and images.

As shown in~\cref{tab:fig1_semantic_efficiency}, evaluated under the protocol proposed in~\cite{zhou2024feature, kobayashi2022decomposing}, our method outperforms~\cite{kobayashi2022decomposing} across all semantic segmentation metrics, despite each primitive modeling a light-field-like radiance field, and performs competitively with Feature 3DGS. This strong performance reflects the explicit representation property of our approach: by decomposing the scene into multiple primitives that partition space, our representation encourages semantic regions to better align with object boundaries.

At the same time, our method is significantly more efficient than~\cite{zhou2024feature} in both memory usage and rendering speed, demonstrating the high compression capability of neural scene representations. The speed improvement arises because this high compression allows the scene to be represented with fewer primitives, thereby effectively avoiding the large alpha-blending overhead of numerous semantic embeddings encountered by~\cite{zhou2024feature}.

Furthermore, as illustrated in \cref{fig:lseg_semantics}, our method not only is quantitatively performant, but also qualitatively produces semantic segmentation results that are comparable to~\cite{zhou2024feature}.

\subsubsection{Segment Anything}
\begin{figure}[t]
    \centering
    \includegraphics[width=1\linewidth]{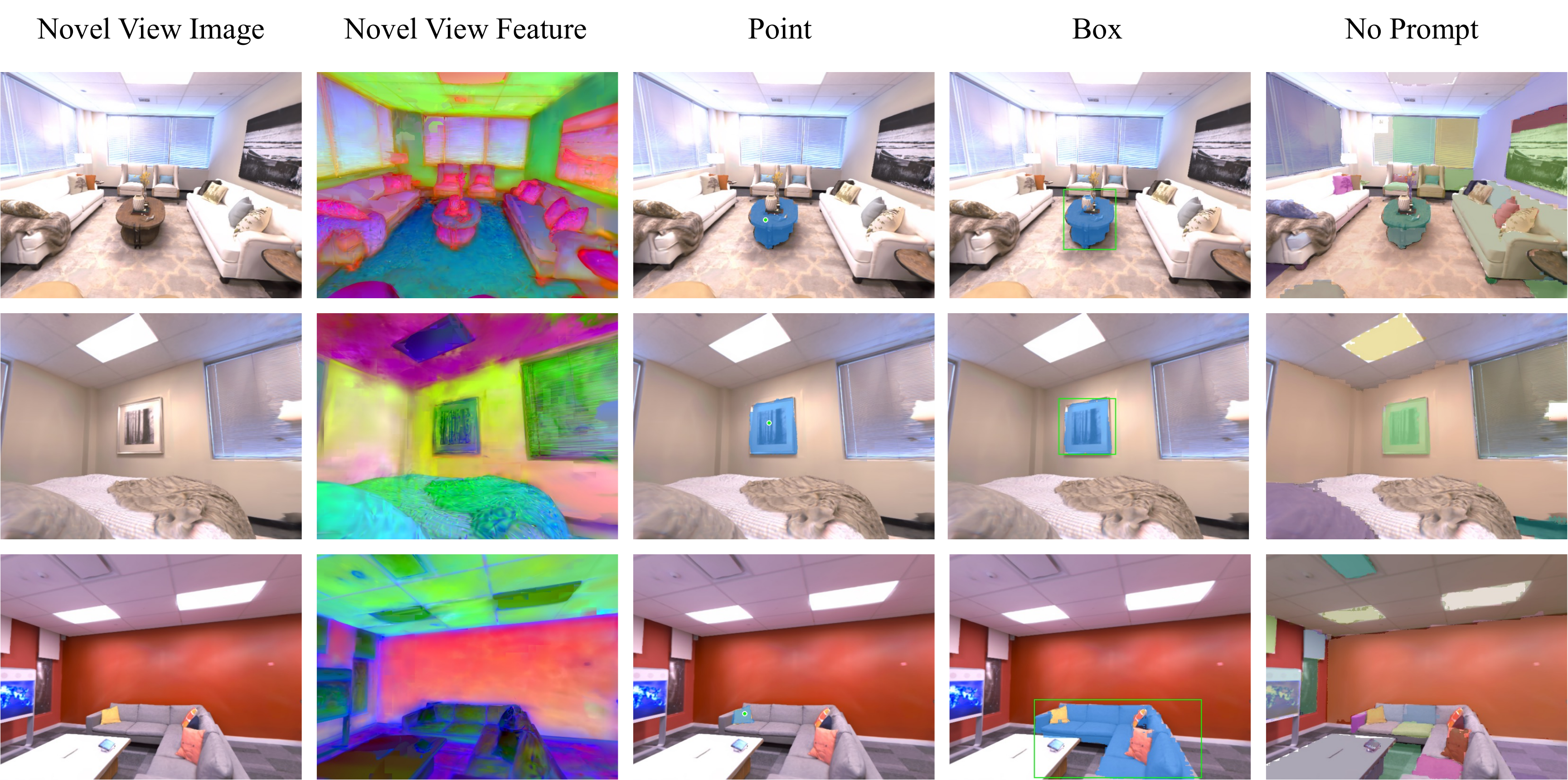}
    \caption{Novel view semantic segmentation with SAM embeddings.}
    \label{fig:sam_semantics}
\end{figure}


In \cref{fig:sam_semantics}, we perform instance segmentation across multiple views and scenes using our primitives and embeddings trained with the embedding guidance of \cite{Kirillov2023SAM}, following the protocol of \cite{zhou2024feature}. For either the point or the box prompt, it is shown that the obtained instance segmentation has a clear boundary and covers most of our intended objects, demonstrating that the 2D feature field rendered from our per-primitive semantic embedding is not only well aligned with the SAM feature from which it is guided, but also multi-view consistent, given that \cref{fig:sam_semantics} is rendered from a novel viewpoint.

\begin{table}[tb]
\caption{Semantic and efficiency comparison on Replica and ScanNet datasets.}
\label{tab:fig1_semantic_efficiency}
\centering
\renewcommand{\arraystretch}{1.08}
\setlength{\tabcolsep}{4pt}
\begin{tabularx}{\textwidth}{@{}
>{\raggedright\arraybackslash}m{2.55cm}|
>{\hsize=0.85\hsize\centering\arraybackslash}X
>{\hsize=1.2\hsize\centering\arraybackslash}X
>{\hsize=0.925\hsize\centering\arraybackslash}X
|
>{\hsize=1.25\hsize\centering\arraybackslash}X
>{\hsize=0.775\hsize\centering\arraybackslash}X
@{}}
\noalign{\hrule height 1.1pt}
Dataset & \multicolumn{3}{c|}{Replica} & \multicolumn{2}{c@{}}{ScanNet} \\
[-0.95ex]
\rule{2.5cm}{0.4pt}
& \multicolumn{1}{c}{\makebox[0pt]{\rule{1.6cm}{0.4pt}}}
& \multicolumn{1}{c}{\makebox[0pt]{\rule{2.1cm}{0.45pt}}}
& \multicolumn{1}{c|}{\makebox[0pt]{\rule{1.2cm}{0.4pt}}}
& \multicolumn{1}{c}{\hspace{0.12cm}\makebox[0pt]{\rule{2.1cm}{0.3pt}}}
& \multicolumn{1}{c@{}}{\makebox[0pt]{\rule{1.1cm}{0.4pt}}} \\
[-0.95ex]
Metric|Method & Nerf-DFF & \mbox{Feature-3DGS} & Ours & \mbox{Feature-3DGS} & Ours \\
\hline
mIoU (\%)$\uparrow$
& 63.6 & \topone{78.9} & \toptwo{77.7}
& 60.02 & \topone{64.26} \\
Accuracy (\%)$\uparrow$
& 86.4 & \topone{94.3} & \toptwo{93.9}
& 87.00 & \topone{89.35} \\
Memory (MB)$\downarrow$
& \toptwo{86.5} & 629.7 & \topone{32.0}
& 1120 & \topone{206} \\
FPS$\uparrow$
& $<3$ & \toptwo{$\sim$3} & \topone{$\sim$8}
& 0.73 & \topone{12.19} \\
\noalign{\hrule height 1.1pt}
\end{tabularx}
\end{table}

\section{Discussion}
Lastly, we compare MLP-Splatting and Gaussian Splatting at the level of
\emph{density field approximation under volumetric rendering}. We found that Gaussian splatting is fundamentally limited by geometric surface coverage: analytic primitives cannot reproduce sharp discontinuities without allocating $\Theta(h^{-2})$ components along object boundaries.
In contrast, MLP-Splatting performs local functional approximation within smooth regions and preserves discontinuities via partition-of-unity, achieving optimal Sobolev rates \cite{yarotsky2017error,siegel2024sharp}.

Our analysis is concentrated into a theorem we call the Equal-Quality Memory Law. This theorem not only explains the bounds on the number of parameters but also indicates why our representation can naturally learn object-level entities, whereas Gaussian Slatting struggles. We briefly illustrate the mathematical formulation (\cref{sec:5_1}) and list important theorems (\cref{sec:5_2}) while leaving the proofs for the supplementary material.

\subsection{Function-Space Perspective}\label{sec:5_1}
 Let $\sigma : \mathbb R^3 \to \mathbb R_{\ge 0}$ denote the volume density and $c : \mathbb R^3 \times \mathbb S^2 \to \mathbb R^3$ the color field. Given a ray $r(t)=o+td$, volumetric rendering produces a pixel value $
\mathcal R_r[\sigma,c]
=
\int_0^{L}
T_\sigma(t)\,
\sigma(r(t))\,
c(r(t),d)\,dt,
$ where $
T_\sigma(t)
=
\exp\!\left(-\int_0^t \sigma(r(s))\,ds\right)
$ is the accumulated transmittance.
Given a set of rays $\mathcal U$, we measure pixel error by 
\begin{equation}
\mathcal E_p((\sigma,c),(\tilde\sigma,\tilde c))
=
\left(
\frac{1}{|\mathcal U|}
\sum_{r\in\mathcal U}
\|
\mathcal R_r[\sigma,c]
-
\mathcal R_r[\tilde\sigma,\tilde c]
\|^p
\right)^{1/p}.
\end{equation}

\noindent
\textbf{Model classes.}
Gaussian splatting constructs density as a $K$-term Gaussian mixture  
$
\sigma_K(x)=\sum_{k=1}^K a_k 
\exp(-\tfrac12(x-\mu_k)^\top\Sigma_k^{-1}(x-\mu_k)),
$ with $x$ 3D position
whereas MLP-Splatting represents density as a sum of neural primitives
$
\sigma_N(x)=\sum_{i=1}^N \rho(f_i(x, \theta_i)),
$
with 6D input ($x$), width-$m$ networks ($f_i$) and $\rho$ enforcing nonnegativity.

The following shows how many parameters are required to approximate a piecewise-smooth target scene $(\sigma^*,c^*)$ up to pixel error $\varepsilon$.


\subsection{Main Theoretical Results}\label{sec:5_2}

\begin{theorem}[Lower Bound for Gaussian Splatting]
\label{thm:gs-main}
For any nonnegative $K$-Gaussian mixture $\sigma_K$,
\begin{equation}
\inf_{\sigma_K}
\mathcal E_1\big((\sigma^*,c^*),(\sigma_K,c^*)\big)
\;\gtrsim\;
C K^{-1/2}.
\end{equation}
\end{theorem}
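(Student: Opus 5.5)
The plan is to reduce the pixel-error lower bound to an approximation lower bound for a discontinuous density across a two-dimensional interface, and then combine a per-Gaussian ``coverage'' estimate with a counting argument.

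\emph{Setting and reduction.} Since the statement is informal ($\gtrsim$, unspecified $C$), I will fix a concrete hard instance inside the class of piecewise-smooth targets. Take $\sigma^*=\lambda\mathbf 1_{\Omega}$, where $\Omega$ is a bounded object with a smooth (say $C^2$) boundary surface $\Gamma$ of area $|\Gamma|$ and $\lambda$ is large enough that $\Omega$ is essentially opaque. The color $c^*$ should take distinct values $c_{\rm in}\neq c_{\rm bg}$ on the object and on an opaque background behind it. The ray set $\mathcal U$ is a dense, roughly uniform family of rays whose hit points on $\Gamma$ form an approximately uniform sample; in the limit, the average over $\mathcal U$ becomes an integral over a 2D parameter domain $D$ of rays. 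Under this choice, the rendered pixel $\mathcal R_r[\sigma^*,c^*]$ is a binary-valued image $u^*=c_{\rm in}\mathbf 1_{S}+c_{\rm bg}\mathbf 1_{D\setminus S}$, where $S$ is the set of rays hitting $\Omega$. Its silhouette edge $\partial S$ is the projection of the occluding contour of $\Gamma$ and has positive length.

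\emph{Key step: each Gaussian resolves at most $O(h)$ of edge length.} The main obstacle is controlling what one Gaussian can do. For a single Gaussian, the map $r\mapsto\int a_k e^{-\frac12(r(t)-\mu_k)^\top\Sigma_k^{-1}(r(t)-\mu_k)}dt$ is itself a 2D Gaussian in ray coordinates, so it is smooth. The rendering operator is Lipschitz in the optical depth $\tau(r)=\int\sigma_K$, because it equals $c_{\rm in}(1-e^{-\tau})+c_{\rm bg}e^{-\tau}$ up to occlusion ordering. The rendered image $u_K$ is therefore a Lipschitz function of a sum of $K$ smooth 2D bumps.

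The argument I intend to make is the following. Fix a scale $h$ and cover $\partial S$ by $\asymp |\partial S|/h$ disjoint ray-space boxes $Q_j$ of side $h$ centred on the edge. In each box $Q_j$ the target jumps by $|c_{\rm in}-c_{\rm bg}|$ across a nearly straight curve. By a standard one-dimensional estimate across the curve, any function whose total variation in $Q_j$ is less than that of the jump incurs $L^1(Q_j)$ error $\gtrsim h^2$. The same holds for any function that is smooth at scale $\ge h$ in $Q_j$: its transition cannot be sharper than $h$, which again forces a mismatch on area $\asymp h^2$. A sharp transition in $Q_j$ requires some Gaussian whose projected footprint has width $\lesssim h$ across the edge and is centred within $O(h)$ of it. I will then argue that an anisotropic Gaussian with minor axis $\lesssim h$ can provide this sharp transition along a length of at most $O(h)$ of the curved edge, using the $C^2$ curvature of the contour together with the fact that Gaussian level sets are ellipses. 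This is where the fine geometric work lies. If the curvature argument proves too delicate, the fallback is to impose a bounded aspect ratio on the $\Sigma_k$ as a modelling assumption. In either case, one Gaussian serves $O(1)$ boxes.

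\emph{Counting and optimization.} Let $M$ be the number of edge boxes. With $K$ Gaussians, at most $O(K)$ of the $M$ boxes are served, and each unserved box contributes error $\gtrsim h^2$. This gives
\[
\mathcal E_1 \gtrsim (M-CK)\,h^2/|D| \asymp \Big(\tfrac{|\partial S|}{h}-CK\Big)h^2 .
\]
Rather than fixing $h$, I choose it adversarially with $h\asymp |\partial S|/(2CK)$. This guarantees that at least half the boxes are unserved, and yields
\[
\mathcal E_1\gtrsim |\partial S|\,h \asymp K^{-1}.
\]
That is stronger than needed in one sense but is measured in $L^1$ of the image. The stated rate $K^{-1/2}$ arises once the error is normalized per pixel relative to resolution, or equivalently once the boxes are counted as surface patches on the 2D boundary $\Gamma$ itself, which is the paper's $\Theta(h^{-2})$ surface coverage. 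In that accounting the $h$-patches number $h^{-2}$, each Gaussian covers $O(1)$ patches, and each uncovered patch contributes pixel error $\gtrsim h^{2}\cdot h^{-1}$ after averaging over rays through it. This leads to $\mathcal E_1\gtrsim h$ with $K\asymp h^{-2}$, i.e.\ $CK^{-1/2}$.

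I expect the delicate points to be twofold. The first is choosing the normalization of $\mathcal U$ so that the exponent comes out exactly as $-1/2$. The second is the single-Gaussian coverage lemma, which I plan to prove by a second-order Taylor comparison between the ellipsoid level set and the curved surface.
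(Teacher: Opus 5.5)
\textbf{There is a genuine gap: the argument does not reach the exponent $K^{-1/2}$.} The argument you actually carry out gives $\mathcal E_1\gtrsim K^{-1}$. That is weaker than the statement, not stronger: $K^{-1}\ll K^{-1/2}$ for large $K$, so it does not imply the claimed bound.

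The closing step meant to recover $K^{-1/2}$ is not a derivation. $\mathcal E_1$ is already a fixed average over $\mathcal U$, so no further per-pixel renormalization is available. Recounting your boxes as $h$-patches of $\Gamma$ is also not equivalent to the image-space count. That bookkeeping needs every ray through an unresolved patch to carry pixel error $\gtrsim h$, and nothing in your silhouette analysis provides this.

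The rate $K^{-1/2}$ comes from resolving a two-dimensional surface in $\mathbb R^3$. In the paper, each Gaussian can $h$-resolve only $O(h^2)$ of the area of $\Gamma$. With $h=\eta K^{-1/2}$, at least half of $\Gamma$ stays unresolved, and the tube formula gives $\|\sigma^*-\sigma_K\|_{L^1}\gtrsim h\asymp K^{-1/2}$ on a layer of thickness $h$ around $\Gamma$. This density error is then pushed into the image through a positive-measure family of rays crossing $\Gamma$ transversally, on which $c^*$ is constant and the pixel value is monotone in the optical thickness.

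Your reduction discards exactly this. Passing to the binary image $u^*$ and working only near $\partial S$ sees just the one-dimensional silhouette curve. With $\Omega$ essentially opaque and $c^*$ constant near $\Gamma$ (the paper's hypothesis), smearing the density across the front-facing part of $\Gamma$ barely changes any pixel. An edge of length $|\partial S|$ meets only $\asymp h^{-1}$ boxes of side $h$, which is why you land at $K^{-1}$.

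Moreover, for any fixed finite $\lambda$ the chord length through $\Omega$ goes to zero continuously at the silhouette, so $u^*$ has no actual jump there. Once $h$ is below the resulting transition width, the per-box bound $\gtrsim h^2$ fails. So even the $K^{-1}$ bound is not established asymptotically.

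\textbf{The single-Gaussian coverage step cannot come from curvature.} An elongated Gaussian whose width across the boundary is $\lesssim h$ can track a $C^2$ curve or surface of curvature $\kappa$ to within $h$ over a tangential length $\asymp\sqrt{h/\kappa}\gg h$. Curvature alone therefore bounds the covered length by $O(\sqrt h)$, not $O(h)$. On a flat piece of $\Gamma$ it gives no bound at all: a single pancake Gaussian centred on the surface, with tiny normal and large tangential variance, meets the $h$-resolution criterion over an arbitrarily large area.

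The bounded anisotropy $\lambda_{\max}(\Sigma_k)/\lambda_{\min}(\Sigma_k)\le\kappa_0$, together with $a_k\le a_{\max}$, is therefore not a fallback but the essential hypothesis of the model class. It is what turns $\sqrt{\lambda_n}\lesssim h$ into tangential widths $\lesssim\sqrt{\kappa_0}\,h$, and hence into a resolved footprint of area $O(h^2)$ per Gaussian. Without it, your claim that one Gaussian serves $O(1)$ boxes or patches is false, and the counting collapses.
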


The proof combines the analyticity of Gaussian mixtures,
a geometric tube argument near the discontinuity set,
and stability of the rendering operator.

Next, assume $(\sigma^*,c^*)$ is piecewise $C^s$ with jump set $\Gamma$, we get
\begin{theorem}[Upper Bound for MLP-Splatting]
\label{thm:mlp-main}
There exists a partition-of-unity construction such that
\begin{equation}
\mathcal E_2\big((\sigma^*,c^*),(\sigma_N,c_N)\big)
\le
C\big(h^s+m^{-s/3}\big),
\end{equation}
where $h$ is patch size and $m$ network width.
Balancing the terms yields parameter cost
$\Theta(\varepsilon^{-3/s})$ to achieve pixel error $\varepsilon$.
\end{theorem}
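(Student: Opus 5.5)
The proof has three steps: a partition-of-unity construction, local neural approximation on each patch, and transfer of the field error to pixel error through stability of the rendering operator $\mathcal R_r$.

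\textbf{Step 1: construction.} Fix a bounded scene domain $\Omega\subset\mathbb R^3$ and cover it by cubes of side $h$. Let $\{\phi_j\}$ be a smooth partition of unity subordinate to slightly enlarged cubes. In the model this role is played by the Gaussian falloffs of the primitives, normalized, or equivalently by truncated Gaussian supports. We require $\|\partial^\alpha\phi_j\|_\infty\lesssim h^{-|\alpha|}$.

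We use the piecewise-smooth structure as follows. The jump set $\Gamma$ splits $\Omega$ into finitely many regions on which $(\sigma^*,c^*)$ is $C^s$. We assign primitives separately to each region, so that no primitive's support straddles $\Gamma$ except in a boundary layer. Because the primitives share no parameters, each one only has to approximate a $C^s$ function.

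\textbf{Step 2: local approximation on each patch.} On each patch $j$, rescale to the unit cube and form the Taylor polynomial of degree $s-1$ of $\sigma^*$ and $c^*$. This gives local error $O(h^s)$ by the standard Bramble--Hilbert/Taylor bound. We then approximate the rescaled $C^s$ function on $[0,1]^3$ by a width-$m$ network. Taking the density dependence to be 3D in space, with the directional inputs of the 6D input handled identically for $c$, the sharp rates of \cite{yarotsky2017error,siegel2024sharp} give sup-norm error $\lesssim m^{-s/3}$. The nonnegativity map $\rho$ (softplus) and the sigmoid are $1$-Lipschitz, so the error survives the output maps. Summing against the partition of unity, and using that each point lies in $O(1)$ supports, gives
\[
\|\sigma_N-\sigma^*\|_{L^\infty(\Omega\setminus \Gamma_h)}+\|c_N-c^*\|_{L^\infty(\Omega\setminus\Gamma_h)}\lesssim h^s+m^{-s/3},
\]
where $\Gamma_h$ is the $h$-tube around $\Gamma$.

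\textbf{Step 3: stability of rendering.} Write $\mathcal R_r[\sigma,c]-\mathcal R_r[\tilde\sigma,\tilde c]$ as a telescoping difference. Since $T_\sigma\sigma=-\tfrac{d}{dt}T_\sigma$, a Gr\"onwall-type estimate yields
\[
|\mathcal R_r[\sigma,c]-\mathcal R_r[\tilde\sigma,\tilde c]|\le \|c\|_\infty\int_0^L|\sigma-\tilde\sigma|\,dt+\sup|c-\tilde c|,
\]
with constants depending only on $L$ and bounded densities. Averaging squares over $\mathcal U$ gives $\mathcal E_2\le C(h^s+m^{-s/3})$ away from the tube.

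\textbf{Step 4: the boundary tube (main obstacle).} The hardest step is controlling rays that pass through $\Gamma_h$. Here the plan relies on primitives not being forced to blend across $\Gamma$. Near $\Gamma$ we use anisotropic, elongated primitives aligned with the region they serve, and cut each one off by the jump, since a ReLU network can represent a sharp one-sided indicator cheaply. The residual error is then confined to a set whose measure along a typical ray is $O(h^s)$. I would make this precise by assuming $\Gamma$ is a $C^s$ surface and approximating its local graph function in the same way as in Step 2, so that the misclassified region has thickness $O(h^s+m^{-s/3})$. The contribution to $\mathcal E_2$ is then of the same order, provided the ray set $\mathcal U$ is not concentrated tangentially to $\Gamma$, which is a mild transversality assumption. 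This is the point where the argument could fail and might need that extra assumption.

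\textbf{Step 5: counting.} The number of patches is $N\sim h^{-3}$, and each primitive has $O(m)$ parameters. Setting $h^s\sim m^{-s/3}\sim\varepsilon$ gives $h\sim\varepsilon^{1/s}$ and hence $N\sim\varepsilon^{-3/s}$. Either balancing choice leads to total cost $\Theta(\varepsilon^{-3/s})$. For instance, with bounded $m$ the cost is $N\cdot O(1)\sim h^{-3}=\varepsilon^{-3/s}$; equivalently, with a single patch, $m\sim\varepsilon^{-3/s}$. The matching lower bound follows from the optimality of the Sobolev rates in \cite{siegel2024sharp}.
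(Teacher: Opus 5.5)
\emph{Verdict: genuine gap.} Your Steps 2 and 3 (local Taylor/network approximation, then the rendering-stability estimate applied ray by ray) use the same ingredients as the paper. The gap is in how you treat the jump set, and as written that treatment does not prove the theorem.

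Your windows come from a regular $h$-grid, so on the cubes meeting $\Gamma$ they blend values from both sides. After Step 2 the density error is therefore $O(1)$ on a layer of width comparable to $h$. Every ray that sees an object boundary crosses that layer, so Steps 1--3 alone give pixel error no better than order $h$. The whole $h^s$ claim rests on Step 4, which is only sketched, and under the theorem's hypotheses that sketch cannot deliver the rate:

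\begin{itemize}
\item The scene class assumes only $C^2$ boundaries. Approximating the local graph of $\Gamma$ on $h$-patches then locates the jump only to within $O(h^2)$, giving pixel error of order $h^2$ rather than $h^s$ once $s>2$.
\item To locate it to accuracy $h^s$ instead, you would refine tangentially to scale $h^{s/2}$ or widen the boundary networks. Either way this costs about $h^{-s}$ parameters at the interface, which exceeds the $h^{-3}$ budget as soon as $s>3$.
\item You also need a transversality condition on $\mathcal U$ that the theorem does not contain. A ray nearly tangent to $\Gamma$ can travel inside the misclassified layer for a length much larger than its thickness.
\end{itemize}

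With $\Gamma\in C^s$ plus transversality, you would be proving a different statement with stronger hypotheses.

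The missing idea is the paper's interface-fitted cover. Cubes of the $h$-grid that meet $\Gamma$ are split along $\Gamma$ into a uniformly bounded number of sub-patches, each contained in a single $\Omega_\ell$; this is where the $C^2$ regularity of $\Gamma$ enters. The partition of unity is then taken subordinate to this fitted cover. As a result:

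\begin{itemize}
\item No window mixes the two sides of $\Gamma$, so every local problem is smooth and the local jet approximation is $O(h^s)$ on every patch.
\item Since $\sigma^*-\sigma_N=\sum_i w_i(\sigma^*-u_i)$ with $\sum_i w_i=1$, the field error is $O(h^s)$ throughout $D$.
\item The stability lemma then bounds every ray's error by $Ch^s$. There is no boundary layer, no transversality assumption, and no smoothness of $\Gamma$ needed beyond $C^2$.
\end{itemize}

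The price is that the $w_i$ must switch off exactly at $\Gamma$: they are subordinate to one-sided patches yet sum to one on both sides. The interface geometry is thus carried by the partition of unity rather than by the networks, and it is not charged to the parameter count. Reading the windows as normalized Gaussian falloffs is what forced you into Step 4.

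Your Step 5 also needs repair. With the additive bound $h^s+m^{-s/3}$:

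\begin{itemize}
\item Bounded $m$ leaves an $O(1)$ network term.
\item A single patch leaves $h^s=O(1)$, and that patch straddles $\Gamma$.
\item Making both terms at most $\varepsilon$ costs $Nm\sim\varepsilon^{-6/s}$, not $\varepsilon^{-3/s}$.
\end{itemize}

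The paper sidesteps this by using fixed-size primitives assumed to reproduce the local degree-$(s-1)$ jet. The error is then $Ch^s$ and the cost is $N=\Theta(h^{-3})=\Theta(\varepsilon^{-3/s})$. If you want genuine width dependence, let the network approximate only the Taylor remainder on the rescaled patch, whose $C^s$ norm is $O(h^s)$. This gives error $O\big((h\,m^{-1/3})^s\big)$ at cost $h^{-3}m$, which is $\varepsilon^{-3/s}$ for every split between $N$ and $m$.
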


The proof combines local neural approximation rates,
geometric partitioning along $\Gamma$, 
and stability of volumetric rendering.

\begin{corollary}[Equal-Quality Memory Law]
To achieve pixel error $\varepsilon$ in $d=3$:
\begin{itemize}
\item Gaussian splatting requires $\Omega(\varepsilon^{-2})$ parameters;
\item MLP-Splatting requires $\Theta(\varepsilon^{-3/s})$ parameters.
\end{itemize}
For $s\ge 2$ (i.e., when the target scene is piecewise twice differentiable), $\varepsilon^{-3/s}\le \varepsilon^{-1.5}$, 
strictly improving the Gaussian rate.
\end{corollary}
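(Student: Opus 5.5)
The corollary is obtained by inverting the two rate statements, \cref{thm:gs-main} and \cref{thm:mlp-main}, and then comparing the exponents. There is no new analysis; the work is in counting parameters consistently and relating the two error norms.

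\textbf{Gaussian side.} A $K$-term mixture carries $P_{\mathrm{GS}}=\Theta(K)$ parameters: each Gaussian has a mean, a covariance, an amplitude, and a bounded number of colour or feature coefficients. Suppose $\mathcal E_1\le\varepsilon$. Then \cref{thm:gs-main} gives $CK^{-1/2}\lesssim\varepsilon$, so $K\gtrsim C^2\varepsilon^{-2}$ and $P_{\mathrm{GS}}=\Omega(\varepsilon^{-2})$.

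To make this a statement about "pixel error $\varepsilon$" in either norm, note that $|\mathcal U|$ is finite. By Jensen (or H\"older), $\mathcal E_1\le\mathcal E_2$. Hence achieving $\mathcal E_2\le\varepsilon$ also forces $\mathcal E_1\le\varepsilon$, and the lower bound holds whichever error the comparison is stated in. The Gaussian colour is fixed to $c^*$ in the theorem. I would remark that letting the colour vary does not help: the lower bound comes from the density jump across $\Gamma$, which \cref{thm:gs-main} isolates.

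\textbf{MLP side, upper bound.} Cover the bounded scene domain by patches of size $h$, giving $\Theta(h^{-3})$ patches. In the partition-of-unity construction of \cref{thm:mlp-main} the number of primitives and their widths must be chosen consistently. The natural choice is a single primitive with width $m$ per patch, at cost $\Theta(m)$ parameters; the positional-encoding input dimension and the depth are constants.

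I would balance the two error terms by taking $h^s\asymp\varepsilon$ with a fixed number of primitives. Since the neural term $m^{-s/3}$ reflects a 3D Sobolev rate, the total network width plays the role of the degrees of freedom. Setting $m^{-s/3}\asymp\varepsilon$ gives $m\asymp\varepsilon^{-3/s}$, which yields $P_{\mathrm{MLP}}=O(\varepsilon^{-3/s})$ with $\mathcal E_2\le C\varepsilon$. Rescaling $\varepsilon$ by $C$ absorbs the constant.

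\textbf{MLP side, matching lower bound.} The $\Theta$ requires a matching lower bound. I would invoke the sharp optimality of Sobolev rates for ReLU networks~\cite{yarotsky2017error,siegel2024sharp}: approximating the unit ball of $C^s$ in three dimensions to accuracy $\varepsilon$ needs $\Omega(\varepsilon^{-3/s})$ parameters up to logarithms. This transfers to pixel error by choosing a smooth density perturbation supported inside one smooth region. Stability of the rendering operator, used in the opposite direction via a lower Lipschitz bound on a suitable ray family, then shows that rendering error controls function error.

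\textbf{Comparison.} If $s\ge2$ then $3/s\le 3/2<2$. Since $\varepsilon<1$, this gives $\varepsilon^{-3/s}\le\varepsilon^{-1.5}=o(\varepsilon^{-2})$, a strict improvement.

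\textbf{Main obstacle.} The hardest step is the parameter accounting behind the $\Theta(\varepsilon^{-3/s})$ count. Naively multiplying $h^{-3}$ patches by width $m$ with both $h,m$ tuned to $\varepsilon$ would give $\varepsilon^{-3/s}\cdot\varepsilon^{-3/s}$. I would first try to show that the $h^s$ term only requires a resolution of $\Gamma$ independent of $\varepsilon$, since the MLPs handle smoothness and the patches exist only to separate discontinuities. That would leave the patch count $O(1)$ and the total cost governed by $m$. Failing that, I would state the corollary with the parameter count inherited directly from the balancing clause of \cref{thm:mlp-main}.
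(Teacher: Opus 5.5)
Your Gaussian half and the closing exponent comparison match the paper's argument (invert $CK^{-1/2}\le\varepsilon$, then note $3/s\le 3/2<2$). Your observation $\mathcal E_1\le\mathcal E_2$ is a real improvement, since the paper sets an $\mathcal E_1$ lower bound against an $\mathcal E_2$ upper bound without comment. Your aside that a free Gaussian colour cannot help is unproved, though: the theorem only treats $c_K=c^*$.

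The gap is on the MLP side, where your balancing step is inconsistent. Taking $h^s\asymp\varepsilon$ forces $\Theta(h^{-3})=\Theta(\varepsilon^{-3/s})$ patches, so the primitive count is not fixed. Giving each primitive width $m\asymp\varepsilon^{-3/s}$ then costs $\varepsilon^{-6/s}$ parameters. It is actually more than that: with the paper's two-hidden-layer primitives, a width-$m$ network has $\Theta(m^2)$ parameters, not $\Theta(m)$. Your proposed repair is to freeze $h$ at a resolution of $\Gamma$ and let the width carry the rate. That cannot be read off Theorem~\ref{thm:mlp-main}: with $h=h_0$ fixed, its bound never drops below $Ch_0^s$, so it cannot certify any accuracy $\varepsilon<Ch_0^s$. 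You would need a different approximation theorem, namely a width-$m$ network approximating a $C^s$ field on a whole fixed smooth piece at rate $m^{-s/3}$, and you do not supply one.

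The missing idea is the opposite regime, which is how the paper gets its count: the \emph{width is fixed} and the number of primitives carries the rate. Read literally, the main-text bound $C(h^s+m^{-s/3})$ would also have a floor at fixed $m$. The appendix version (Theorem~\ref{thm:app_mlps}) removes it by assuming that a fixed-size primitive can reproduce the local degree-$(s-1)$ Taylor jet on a rescaled patch. The per-patch error is then itself $O(h^s)$. The parameter count is $\Theta(N)$ with $N=\Theta(h^{-3})$ from the interface-fitted cover, and choosing $h\asymp\varepsilon^{1/s}$ gives $\Theta(\varepsilon^{-3/s})$. The corollary's MLP half is just a citation of this result, so your fallback coincides with the paper's proof, but your own derivation of the count does not.

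Also, the paper's $\Theta$ is the size of this construction (the $\Omega(h^{-3})$ half is volume comparison for the cover), not a lower bound over all MLP-Splatting models. Your matching-lower-bound step is therefore unnecessary. As sketched it would not go through anyway, for two reasons:
\begin{itemize}
\item bounded-depth network lower bounds hold only up to logarithms, so they would not give a clean $\Theta$;
\item rendering admits no lower Lipschitz bound over the finite ray set $\mathcal U$, since a density perturbation supported away from those rays changes no pixel.
\end{itemize}
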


While real-world scenes are globally non-smooth, they often exhibit smooth structure piecewise: discontinuities are concentrated near object boundaries, whereas the fields within each region remain locally regular. This suggests that, in practice, MLP-Splatting can achieve a comparable or lower rendering error with a comparable or smaller number of parameters than 3DGS.

\section{Limitations}
We do not explicitly design a density control strategy for MLP-Splatting. The positional-gradient–based densification used in 3DGS does not transfer well, as the gradients are often weak and noisy. Moreover, MLP primitives do not expose an explicit opacity parameter for pruning.

However, under RGB supervision we observe that primitives in color-consistent regions tend to rapidly expand their scales, causing neighboring primitives to “die,” while primitives in appearance-complex regions exhibit parameter oscillation to fit details. Both behaviors can reduce efficiency or reconstruction quality, suggesting that a suitable density control mechanism could further improve the framework.

\section{Conclusion}

We introduced \textit{MLP-Splatting}, a neural scene representation in which each primitive is modeled as an independent compact MLP. 
Unlike existing approaches built on geometric primitives, our formulation treats neural predictors themselves as the fundamental units of the scene representation.

This design encourages primitives to specialize to coherent scene regions, leading to emergent object-level structure under RGB supervision and enabling straightforward integration with 2D semantic feature distillation. We show that compact neural primitives can represent scenes with strong photometric fidelity, competitive semantics, and substantially improved efficiency. These results suggest an alternative paradigm for neural scene representations. We hope this work motivates a shift from attaching semantics to low-level primitives toward designing representations whose inductive bias makes object-level structure emerge as a first-class property.

\section*{Acknowledgements}
This research is supported by the EPSRC grant EP/Y020499/1. We appreciate all members of the Robot Vision Group for insightful discussions.

\clearpage  

%
%
\bibliographystyle{splncs04}
\bibliography{arxiv}

\clearpage
\appendix
\section{Ablation Study}
\subsection{Novel-view Synthesis without Semantic Guidance}
\begin{table}[h]
\caption{Effect of co-training on the Replica dataset.}
\label{tab:fig1_cotraining}
\centering
\begin{tabularx}{0.9\textwidth}{@{}l *{3}{>{\centering\arraybackslash}X}@{}}
\toprule
 Replica     & PSNR \(\uparrow\) & SSIM \(\uparrow\) & LPIPS \(\downarrow\) \\
\midrule
MLP-Splatting w/o co-train & 35.57 & 0.969 & 0.094 \\
MLP-Splatting w co-train   & \topone{36.25} & \topone{0.971} & \topone{0.090} \\
\bottomrule
\end{tabularx}
\end{table}

This subsection compares MLP-Splatting with and without semantic guidance. \cref{tab:fig1_cotraining} shows that jointly optimizing semantic embeddings together with photometric supervision improves the rendering quality. 
This behavior is non-trivial. In many cases, if the geometry learned from RGB supervision is inaccurate, introducing semantic supervision may alter the geometry in ways that improve semantic consistency but degrade photometric fidelity, or vice versa.

In contrast, we observe consistent improvements across all three metrics (PSNR $+$ 0.68\,dB, SSIM $+$ 0.002, and LPIPS $-$ 0.004). This indicates that even under pure RGB supervision, MLP-Splatting already learns a reasonably correct object-level spatial occupancy of the scene. The additional semantic signal therefore acts as a regularizer that refines the representation without compromising photometric accuracy.

\subsection{Number of Feature Slots Ablation}

\begin{table}[htb]
\caption{Ablation on the number of feature slots, on the Replica dataset.}
\label{tab:slot_ablation}
\centering
\begin{tabularx}{0.8\textwidth}{@{}l *{3}{>{\centering\arraybackslash}X}@{}}
\toprule
    Replica  & Memory (MB) & FPS & mIoU (\%) \\
\midrule
Slots \(=1\) & \topone{21} & \topone{\(\sim\)10.0} & 76.5 \\
Slots \(=4\) & 32 & \toptwo{\(\sim\)8.0} & \toptwo{77.7} \\
Slots \(=8\) & 40 & \(\sim\)6.7 & \topone{78.2} \\
\bottomrule
\end{tabularx}
\end{table}

We further ablate the number of feature slots and evaluate three settings: 1, 4, and 8 slots, where 4 is the default choice used throughout the main paper. Using 8 slots achieves slightly better mIoU, but at the cost of higher memory consumption and lower rendering speed. In practice, 4 slots are already sufficient to capture spatial heterogeneity, especially when a single MLP primitive may exhibit different semantic meanings under different viewing directions. The quantitative results are summarized in \cref{tab:slot_ablation}.

\subsection{Number of Primitives Ablation}

\begin{table}[htb]
\caption{Ablation on the number of primitives, as controlled by the octree stopping threshold \(m\), on the Replica \texttt{room0} scene.}
\label{tab:primitive_ablation}
\centering
\begin{tabularx}{0.9\textwidth}{@{}l *{4}{>{\centering\arraybackslash}X}@{}}
\toprule
  \texttt{room0}    & Total MLPs & Memory (MB) & FPS & PSNR \\
\midrule
\(m=5\)  & 3862 & 90 & \(\sim\)7.4 & \topone{36.42} \\
\(m=10\) & 2398 & \toptwo{56} & \toptwo{\(\sim\)8.0} & \toptwo{36.23} \\
\(m=20\) & 1344 & \topone{31} & \topone{\(\sim\)11.2} & 35.88 \\
\bottomrule
\end{tabularx}
\end{table}

By controlling the stopping criterion of the octree sampling, namely the maximum number of COLMAP points allowed in each leaf node, denoted by \(m\), we can directly control how many MLP primitives are initialized from the COLMAP point cloud. Empirically, \(m=10\) provides a good balance among model size, rendering speed, and photometric quality, and is therefore used as the default setting throughout the main paper. We conduct an ablation study on \texttt{\texttt{room0}} with \(m=5\), \(m=10\), and \(m=20\), and report the results in \cref{tab:primitive_ablation}.

\subsection{Tile Size Ablation}
We compare per-pixel sorting, which yields the exact MLP ordering, with tile-based sorting using \(8\times 8\) tiles on ScanNet \texttt{room0}. The empirically chosen \(8\times 8\) tile size causes only a marginal PSNR drop, from 36.58\,dB to 36.23\,dB, while improving computational efficiency by about \(2\times\). These results support the validity of tile-center-based approximate sorting in practice.

We emphasize, however, that the exact speed--accuracy trade-off depends on the rendering resolution and scene complexity. In particular, the amount of wrong ordering introduced by the approximation is affected by the structural complexity of objects covered by a tile. A natural future extension is to use the tile-center ray to obtain a coarse ordering, followed by per-pixel refinement only for MLPs with very close depth values, where ordering ambiguity is most likely to occur.

\section{Implementation Details}
\subsection{MLP Initialization}
We initialize the geometric parameters of the MLP primitives (position, scales, orientations) from the COLMAP point cloud via an adaptive octree partition (see \cref{alg:octree_init}). The point cloud is recursively subdivided until each leaf contains at most a threshold number of points, followed by a shallow coarse-to-fine refinement up to half depth to improve coverage in large spatial regions. Each final leaf defines one primitive, with its center at the centroid of the contained points and its radius set to half the diagonal of the corresponding cube.

\begin{algorithm}[h]
\caption{Octree-based initialization of MLP primitives}
\label{alg:octree_init}
\small
\begin{algorithmic}[1]
\Require Point cloud $\mathcal{P}$, threshold $m$
\Ensure Primitive centers $\{\mathbf{x}_\ell\}$ and radii $\{r_\ell\}$

\State Build an adaptive octree over $\mathcal{P}$ by recursively subdividing any node containing more than $m$ points into 8 equal sub-cubes
\State Let $\mathcal{L}$ be the leaf set of the initial octree
\State $d_{\mathrm{stop}} \gets \left\lfloor \frac{D_{\max}+1}{2}\right\rfloor$, where $D_{\max}$ is the maximum leaf depth

\For{$d \gets 0$ to $d_{\mathrm{stop}}-1$}
    \State collect all current leaf nodes at depth $d$
    \For{each such leaf}
        \State split it into non-empty child cubes
    \EndFor
\EndFor

\For{each final leaf node $\ell$}
    \State set primitive center to the centroid of its points: $\mathbf{x}_\ell \gets \frac{1}{|\mathcal{P}_\ell|}\sum_{\mathbf{p}\in\mathcal{P}_\ell}\mathbf{p}$
    \State set primitive radius to the cube half-diagonal: $r_\ell \gets \frac{\sqrt{3}}{2}s_\ell$
\EndFor
\end{algorithmic}
\end{algorithm}

\subsection{Tile-Based Rendering and Training}
In the main paper, we briefly describe our customized CUDA kernels for primitive sorting and alpha compositing. Specifically, with a fixed tile size of \(8\times8\), we first perform tile-level AABB culling to identify the candidate MLP primitives for each tile. We then use the tile-center ray to compute an approximate front-to-back ordering shared by all pixels within the tile, followed by block radix sort within a CUDA block of 256 threads. The sorted primitives are processed sequentially: for each primitive, its MLP parameters are loaded into shared memory, and all pixels in the tile collaboratively evaluate the primitive and accumulate colors via alpha blending. We further adopt early termination once the residual transmittance falls below a threshold to improve efficiency.

Compared with vanilla 3DGS, this design naturally enables tile-based training as the fundamental processing unit. In our implementation, each iteration samples 4 images, and for each image one quarter of the tiles are used for optimization. We observe that this tile-based mini-batching yields more stable optimization and faster convergence. Using one full image per iteration reaches a similar final solution, but needs approximately \(1.5\times\) to \(2\times\) more iterations for training in practice.

%
%

\section{Additional Discussion}
All Feature-3DGS results reported in the main paper correspond to the standard model rather than its accelerated variant. Owing to the dimensional mismatch in the accelerated formulation, the authors admit that the variant does not support the capabilities emphasized, such as semantic querying, segmentation, and editing directly on the 3D Gaussians. Instead, it only renders a lower-dimensional feature map and then recovers high-dimensional semantics in the 2D image space. In essence, its functionality is limited to 2D semantic rendering. Therefore, it is more appropriately viewed as a 2D rendering-based acceleration strategy, rather than a equivalent baseline model under the same 3D semantic setting. 

Even when compared against this variant whose functionality is limited, our method remains superior. The accelerated variant reported by Feature-3DGS achieves about 1.5× higher speed and 1/3 memory usage than the standard model. On Replica, MLP-Splatting rendering surpasses this variant in all photometric and semantic metrics, while still being roughly 2× faster and 5× more memory-efficient. 

Regarding FPS reported in the main paper, we measure standard forward-rendering wall-clock time per test image. This FPS reflects practical runtime performance. We note that the higher FPS reported in the Feature-3DGS paper may follow a different protocol, measuring only an individual kernel pass rather than full forward rendering. Our reproduction strictly follows the official codebase and implementation.

\section{Analysis of Expressiveness: Gaussian Splatting vs.\ MLP-Splatting}
\label{app:theory}

In this section, we give a mathematically precise version of the complexity comparison stated in the main text. Throughout, we compare both representations under the same volumetric rendering operator, following the standard radiance-field formulation of Neural Radiance Fields \cite{Mildenhall2020NeRF} and Gaussian Splatting~\cite{Kerbl2023GaussianSplatting}.

\begin{remark}[Proof intuition]
The comparison hinges on two fundamentally different approximation mechanisms.

For Gaussian splatting, the main difficulty lies in resolving jump discontinuities across object boundaries.  
Each Gaussian component is an analytic function and therefore cannot reproduce a sharp discontinuity.  
Under bounded anisotropy, a single Gaussian can only approximate the transition across a boundary within a patch whose tangential extent is comparable to the transition thickness, which limits the area of boundary that one primitive can resolve.

For MLP-Splatting, the representation operates through fixed-size local neural primitives combined with an interface-fitted decomposition of the scene.  
A partition-of-unity construction reduces the global approximation problem to smooth local approximation within each patch, where standard neural approximation results apply.
\end{remark}
\subsection{Volumetric rendering operator}

\begin{definition}[Volumetric rendering operator]
Let $\sigma:\mathbb{R}^3\to\mathbb{R}_{\ge 0}$ be a bounded density field and
$c:\mathbb{R}^3\times\mathbb{S}^2\to\mathbb{R}^3$ be a bounded color field.
For a ray
\[
r(t)=o+t d,\quad t\in[0,L],\quad d\in\mathbb{S}^2,
\]
the rendered color is
\begin{equation}
\begin{split}
\mathcal R_r[\sigma,c]
=&
\int_0^L T_\sigma(t)\,\sigma(r(t))\,c(r(t),d)\,dt,\\
\quad
T_\sigma(t)=&\exp\!\Big(-\int_0^t \sigma(r(s))\,ds\Big).
\label{eq:app_render_cont}
\end{split}
\end{equation}
The discrete front-to-back approximation with samples $\{t_j\}$ and $\Delta t_j=t_{j+1}-t_j$ is
\begin{equation}
\alpha_j = 1-\exp\!\big(-\sigma(r(t_j))\Delta t_j\big),\quad
T_j = \prod_{\ell<j}(1-\alpha_\ell),\quad
\widehat{\mathcal R}_r = \sum_j T_j \alpha_j\, c(r(t_j),d).
\label{eq:app_render_disc}
\end{equation}
As $\max_j \Delta t_j\to 0$, one has $\widehat{\mathcal R}_r\to \mathcal R_r$~\cite{Mildenhall2020NeRF}.
\end{definition}

\begin{definition}[Image error]
Let $\mathcal U$ be a finite set of rays. For $1\le p<\infty$, define
\begin{equation}
\mathcal E_p\big((\sigma,c),(\tilde\sigma,\tilde c)\big)
=
\Big(
\frac1{|\mathcal U|}
\sum_{r\in\mathcal U}
\|
\mathcal R_r[\sigma,c]-\mathcal R_r[\tilde\sigma,\tilde c]
\|_2^p
\Big)^{1/p}.
\label{eq:app_img_err}
\end{equation}
In the main text we use $p=1$ and $p=2$.
\end{definition}

\subsection{Scene class and model classes}

\begin{definition}[Piecewise-smooth scene class]
\label{def:app_scene}
Let $D\subset\mathbb R^3$ be bounded. We say $(\sigma^*,c^*)$ belongs to the piecewise-$C^s$ scene class if there exist finitely many pairwise disjoint open sets
$\{\Omega_\ell\}_{\ell=1}^M$ with $C^2$ boundaries such that
\[
\overline D \subset \bigcup_{\ell=1}^M \overline{\Omega_\ell},
\]
and for every $\ell$,
\[
\sigma^*|_{\Omega_\ell}\in C^s(\Omega_\ell),
\quad
c^*(\cdot,d)|_{\Omega_\ell}\in C^s(\Omega_\ell)\ \text{uniformly in } d\in\mathbb S^2.
\]
The jump set is
\[
\Gamma=\bigcup_{\ell=1}^M \big(\partial\Omega_\ell\cap D\big).
\]
\end{definition}

\paragraph{Gaussian Splatting.}
In the main text, Gaussian splatting represents density by a $K$-term Gaussian mixture. For a rigorous lower bound, we work in the regular nonnegative Gaussian class
\begin{equation}
\sigma_K(x)
=
\sum_{k=1}^K
a_k \exp\!\Big(-\tfrac12 (x-\mu_k)^\top \Sigma_k^{-1}(x-\mu_k)\Big), 
\label{eq:app_gauss_class}
\end{equation}
\label{eq:app_gauss_class} 
where $
0\le a_k\le a_{\max},$ $
\frac{\lambda_{\max}(\Sigma_k)}{\lambda_{\min}(\Sigma_k)}\le \kappa_0
$. The color field is denoted by $c_K(x,d)$ and assumed bounded. The parameter count is $\Theta(K)$.

\paragraph{MLP-Splatting.}
In the implemented model, both density and color primitives are driven by the same 6D primitive-local feature
\[
x_i=[d;r_i]\in\mathbb R^6,
\]
where $d\in\mathbb S^2$ is the ray direction and
$r_i\in\mathbb R^3$ is the residual vector from the primitive center to the soft contact point.
For the analysis, we use the same input convention and write the primitive-local feature as
\[
x_i=\zeta_i(x)\in\mathbb R^6,
\]
where $\zeta_i$ is the local coordinate map induced by the primitive around its support.

Each primitive uses a fixed-size network $f_i:\mathbb R^6\to\mathbb R^4$ whose input is the primitive-local feature $\zeta_i(x)=[d;r_i]\in\mathbb R^6$.
We write
\begin{equation}
f_i(\zeta_i(x))
=
\big(f_i^{\mathrm{rgb}}(\zeta_i(x)),\, f_i^{\sigma}(\zeta_i(x))\big),
\end{equation}
where
\[
f_i^{\mathrm{rgb}}(\zeta_i(x))\in\mathbb R^3,
\quad
f_i^{\sigma}(\zeta_i(x))\in\mathbb R.
\]
The primitive outputs are then defined as
\begin{equation}
u_i(x)=\rho\!\left(f_i^{\sigma}(\zeta_i(x))\right),
\quad
\rho=\mathrm{softplus},
\label{eq:app_ui}
\end{equation}
\begin{equation}
v_i(x,d)=\mathrm{sigmoid}\!\left(f_i^{\mathrm{rgb}}(\zeta_i(x))\right).
\label{eq:app_vi}
\end{equation}

To assemble a global field, let $\{w_i\}_{i=1}^N$ be a nonnegative partition of unity subordinate to the primitive supports. Then
\begin{equation}
\sigma_N(x)=\sum_{i=1}^N w_i(x)\,u_i(x),
\quad
c_N(x,d)=\sum_{i=1}^N w_i(x)\,v_i(x,d).
\label{eq:app_mlps_fields}
\end{equation}
Since each primitive has fixed size, the total parameter count is $\Theta(N)$.

\subsection{Auxiliary lemmas}

\begin{property}[Analyticity of Gaussian mixtures]
\label{prop:app_analytic}
Any finite Gaussian mixture $\sigma_K$ of the form~\eqref{eq:app_gauss_class} is real analytic on $\mathbb R^3$, hence continuous everywhere~\cite{krantz2002analytic}.
In particular, it cannot reproduce a jump discontinuity across a smooth interface.
\end{property}

\begin{lemma}[Tube formula]
\label{lem:app_tube}
Let $\Gamma\subset\mathbb R^3$ be a compact $C^2$ surface. Then there exists $h_0>0$ such that for all $0<h<h_0$,
\[
U_h(\Gamma):=\{x\in\mathbb R^3:\operatorname{dist}(x,\Gamma)\le h\}
\]
satisfies
\begin{equation}
|U_h(\Gamma)|=2h\,\operatorname{Area}(\Gamma)+O(h^3).
\label{eq:app_tube}
\end{equation}
More generally, if $\Gamma'\subset\Gamma$ is a compact surface patch, then
\begin{equation}
|U_h(\Gamma')|=2h\,\operatorname{Area}(\Gamma')+O(h^2)+O(h^3).
\label{eq:app_tube_patch}
\end{equation}
\end{lemma}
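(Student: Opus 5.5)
The plan is to use the normal-exponential (tubular neighbourhood) map. Since $\Gamma$ is compact and $C^2$, it has positive reach: there is $h_0>0$ such that every $x$ with $\operatorname{dist}(x,\Gamma)<h_0$ has a unique nearest point on $\Gamma$. Fix a continuous unit normal field $\nu$ on $\Gamma$; the paper's $\Gamma$ arises as a union of boundaries of open sets, so orientability holds, and in any case one can work locally with a finite atlas. Define
\[
\Phi:\Gamma\times(-h_0,h_0)\to\mathbb R^3,\qquad \Phi(y,t)=y+t\,\nu(y).
\]
By the positive-reach property, $\Phi$ is a $C^1$ diffeomorphism onto the open $h_0$-neighbourhood. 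Its restriction to $\Gamma\times[-h,h]$ maps onto $U_h(\Gamma)$ for every $h<h_0$.

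Next compute the Jacobian. For $v$ tangent to $\Gamma$ at $y$,
\[
d\Phi(v,0)=(I-t\,S_y)v,\qquad \partial_t\Phi=\nu(y),
\]
where $S_y$ is the shape operator. It follows that
\[
J(y,t)=\det(I-tS_y)=1-2H(y)\,t+K_G(y)\,t^2,
\]
where $H$ is the mean curvature and $K_G$ the Gauss curvature. Both are continuous because $\Gamma$ is $C^2$, and bounded because $\Gamma$ is compact. Shrinking $h_0$ if necessary so that $J>0$, the change-of-variables formula gives
\[
|U_h(\Gamma)|=\int_\Gamma\int_{-h}^{h}\big(1-2Ht+K_Gt^2\big)\,dt\,dA
=2h\,\operatorname{Area}(\Gamma)+\tfrac{2h^3}{3}\int_\Gamma K_G\,dA .
\]
The linear-in-$t$ term cancels by symmetry of the interval. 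The remaining cubic term is $O(h^3)$ since $K_G$ is bounded, which proves \eqref{eq:app_tube}.

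For a compact patch $\Gamma'\subset\Gamma$, split $U_h(\Gamma')$ into two parts:
\begin{itemize}
\item the points whose nearest point on $\Gamma'$ is interior, i.e.\ the set $\Phi(\operatorname{int}\Gamma'\times[-h,h])$, whose volume is $2h\operatorname{Area}(\Gamma')+O(h^3)$ by the same computation;
\item the points within distance $h$ of the boundary curve $\partial\Gamma'$, not already covered by the first part.
\end{itemize}
Assuming $\partial\Gamma'$ is a rectifiable curve, as is implicit in "surface patch", the second part lies in $\{x:\operatorname{dist}(x,\partial\Gamma')\le h\}$. This set is a tube around a curve, with volume at most $C\,\pi h^2\,\operatorname{Length}(\partial\Gamma')+O(h^3)$; a Steiner-type bound suffices, for example covering $\partial\Gamma'$ by $O(h^{-1})$ balls of radius $2h$. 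Combining the two parts gives $2h\operatorname{Area}(\Gamma')+O(h^2)+O(h^3)$, which is \eqref{eq:app_tube_patch}.

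The main obstacle is the injectivity of $\Phi$, i.e.\ the existence of $h_0$. For this I would invoke the standard fact that compact $C^2$ submanifolds have positive reach (Federer). Alternatively, it follows from a direct argument: the inverse function theorem gives local injectivity, and compactness together with the bounded curvature of $\Gamma$ excludes distant self-overlaps for small $t$. The boundary-layer estimate in the patch case is routine by comparison.
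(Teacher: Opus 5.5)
Your proposal is correct and follows the same route as the paper. You use the normal map $\Phi(y,t)=y+t\nu(y)$ as a diffeomorphism for small $t$, with Jacobian $\det(I-tS_y)=1-2Ht+K_Gt^2$, and integrating over $\Gamma\times[-h,h]$ cancels the linear term and leaves $2h\operatorname{Area}(\Gamma)+O(h^3)$.

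For the patch case you are more careful than the paper, which only says ``the same computation restricted to $\Gamma'$''. That restriction alone controls only $|\Phi(\Gamma'\times[-h,h])|$. Your explicit covering bound on the tube around the rectifiable curve $\partial\Gamma'$ is what accounts for the boundary caps that produce the $O(h^2)$ term.
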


\begin{proof}
For sufficiently small $h$, the normal map
\[
\Phi:\Gamma\times[-h,h]\to U_h(\Gamma),\quad \Phi(y,s)=y+s\,n(y),
\]
is a diffeomorphism, where $n(y)$ is the unit normal at $y$.
If $\kappa_1(y),\kappa_2(y)$ are the principal curvatures, then the Jacobian determinant is
\[
J(y,s)=\det(I-sS_y)
=
(1-s\kappa_1(y))(1-s\kappa_2(y))
=
1-s(\kappa_1+\kappa_2)+s^2\kappa_1\kappa_2.
\]
Integrating $J(y,s)$ over $\Gamma\times[-h,h]$ gives~\eqref{eq:app_tube}. The localized version~\eqref{eq:app_tube_patch} is the same computation restricted to $\Gamma'$; it is the codimension-one case of the classical tube formula~\cite{federer1959curvature,gray2004tubes}.
\end{proof}

\begin{lemma}[Stability of the rendering operator]
\label{lem:app_stability}
Assume that on every ray segment $r([0,L])$,
\[
0\le \sigma,\tilde\sigma \le \sigma_{\max},
\quad
\|c\|_{L^\infty(\mathbb R^3\times\mathbb S^2)}\le C_c,
\quad
\|\tilde c\|_{L^\infty(\mathbb R^3\times\mathbb S^2)}\le C_c.
\]
Then there exist constants $C_1,C_2>0$, depending only on $(\sigma_{\max},L,C_c)$, such that for every ray $r$,
\begin{equation}
\big\|
\mathcal R_r[\sigma,c]-\mathcal R_r[\tilde\sigma,\tilde c]
\big\|_2
\le
C_1\int_0^L |\sigma(r(t))-\tilde\sigma(r(t))|\,dt
+
C_2\|c-\tilde c\|_{L^\infty}.
\label{eq:app_stability}
\end{equation}
\end{lemma}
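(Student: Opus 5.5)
Fix a ray $r$ and write $\sigma(t):=\sigma(r(t))$, $\tilde\sigma(t):=\tilde\sigma(r(t))$, $T(t):=T_\sigma(t)$, $\tilde T(t):=T_{\tilde\sigma}(t)$. We split the rendering difference by adding and subtracting the mixed term $\mathcal R_r[\tilde\sigma,c]$:
\[
\mathcal R_r[\sigma,c]-\mathcal R_r[\tilde\sigma,\tilde c]
=\big(\mathcal R_r[\sigma,c]-\mathcal R_r[\tilde\sigma,c]\big)+\big(\mathcal R_r[\tilde\sigma,c]-\mathcal R_r[\tilde\sigma,\tilde c]\big).
\]

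The color term is routine. The second bracket equals $\int_0^L \tilde T\tilde\sigma\,(c-\tilde c)\,dt$, and $\tilde T\tilde\sigma=-\frac{d}{dt}\tilde T$ is nonnegative. Its integral is therefore $1-\tilde T(L)\le 1$, which gives the bound $\|c-\tilde c\|_{L^\infty}$. So we may take $C_2=1$, or $C_2=\sigma_{\max}L$ if we prefer the cruder estimate $\tilde T\le 1$.

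The density term needs more care. We write $T\sigma-\tilde T\tilde\sigma=T(\sigma-\tilde\sigma)+(T-\tilde T)\tilde\sigma$. The first piece contributes at most $C_c\int_0^L|\sigma-\tilde\sigma|\,dt$, since $T\le 1$. For the second piece we use that $x\mapsto e^{-x}$ is $1$-Lipschitz on $[0,\infty)$, so
\[
|T(t)-\tilde T(t)|\le\int_0^t|\sigma-\tilde\sigma|\,ds\le\int_0^L|\sigma-\tilde\sigma|\,ds .
\]
Integrating against $\tilde\sigma\le\sigma_{\max}$ over $[0,L]$ bounds this piece by $C_c\,\sigma_{\max}L\int_0^L|\sigma-\tilde\sigma|\,dt$. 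Hence $C_1=C_c(1+\sigma_{\max}L)$ works.

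An alternative for the second piece is to use $\int\tilde T\tilde\sigma\le 1$ after reordering, which avoids the factor $\sigma_{\max}L$. This refinement is not needed, because the statement allows the constants to depend on $(\sigma_{\max},L,C_c)$.

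The only delicate point is measurability and integrability along the ray. This follows from the boundedness assumptions, since all functions involved are bounded on the compact segment $[0,L]$. Throughout we take norms as the Euclidean $\|\cdot\|_2$ of vector-valued integrals, using $\|\int v\|_2\le\int\|v\|_2$. Also, $\|c-\tilde c\|_{L^\infty}$ denotes the sup of $\|c-\tilde c\|_2$, so the color bound transfers directly.
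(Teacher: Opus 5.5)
Your proof is correct and follows the paper's argument. It uses the same split of the density perturbation into $T(\sigma-\tilde\sigma)$ and $(T-\tilde T)\tilde\sigma$, and the same bound $|T-\tilde T|\le\int_0^t|\sigma-\tilde\sigma|\,ds$; your Lipschitz argument for $e^{-x}$ is the paper's interpolation in $\lambda$. It also gives the same constant $C_1=C_c(1+\sigma_{\max}L)$. Two things you do slightly better: your explicit mixed term $\mathcal R_r[\tilde\sigma,c]$ makes the telescoping cleaner than the paper's ``combining'' step, and the identity $\int_0^L\tilde T\tilde\sigma\,dt=1-\tilde T(L)$ sharpens the paper's $C_2=\sigma_{\max}L$ to $C_2=1$.

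One caveat on your unused aside about removing the factor $\sigma_{\max}L$. Reordering after your bound on $|T-\tilde T|$ only produces $\int_s^L\tilde\sigma\,dt$, so the aside does not follow as stated. It needs the identity $T(t)-\tilde T(t)=-\int_0^t T(s)(\sigma-\tilde\sigma)(s)\,e^{-\int_s^t\tilde\sigma}\,ds$, after which Fubini gives $C_1=2C_c$.
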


\begin{proof}
Write $\sigma(t)=\sigma(r(t))$, $\tilde\sigma(t)=\tilde\sigma(r(t))$, $c(t)=c(r(t),d)$, and $\tilde c(t)=\tilde c(r(t),d)$.
Then
\[
\mathcal R_r[\sigma,c]
=
\int_0^L T_\sigma(t)\,\sigma(t)\,c(t)\,dt,
\quad
T_\sigma(t)=\exp\!\Big(-\int_0^t \sigma(s)\,ds\Big).
\]

We first perturb the density while keeping color fixed:
\[
\mathcal R_r[\tilde\sigma,c]-\mathcal R_r[\sigma,c]
=
\int_0^L T_{\tilde\sigma}(t)\big(\tilde\sigma(t)-\sigma(t)\big)c(t)\,dt
+
\int_0^L \big(T_{\tilde\sigma}(t)-T_\sigma(t)\big)\sigma(t)c(t)\,dt.
\]
The first term is bounded by
\[
C_c\int_0^L |\tilde\sigma(t)-\sigma(t)|\,dt.
\]
For the second term, let $\Delta\sigma=\tilde\sigma-\sigma$ and define
\[
\sigma_\lambda=\sigma+\lambda\Delta\sigma,\quad \lambda\in[0,1].
\]
Then
\[
T_{\tilde\sigma}(t)-T_\sigma(t)
=
-\int_0^1 T_{\sigma_\lambda}(t)\Big(\int_0^t \Delta\sigma(s)\,ds\Big)\,d\lambda,
\]
so
\[
|T_{\tilde\sigma}(t)-T_\sigma(t)|
\le
\int_0^t |\Delta\sigma(s)|\,ds.
\]
Therefore
\begin{align*}
\left\|
\int_0^L \big(T_{\tilde\sigma}(t)-T_\sigma(t)\big)\sigma(t)c(t)\,dt
\right\|_2
&\le
\sigma_{\max} C_c
\int_0^L\int_0^t |\Delta\sigma(s)|\,ds\,dt \\
&\le
\sigma_{\max}LC_c \int_0^L |\Delta\sigma(s)|\,ds.
\end{align*}
Hence
\[
\big\|
\mathcal R_r[\tilde\sigma,c]-\mathcal R_r[\sigma,c]
\big\|_2
\le
C_c(1+\sigma_{\max}L)\int_0^L |\tilde\sigma(t)-\sigma(t)|\,dt.
\]

Next perturb the color while keeping density fixed:
\[
\mathcal R_r[\sigma,\tilde c]-\mathcal R_r[\sigma,c]
=
\int_0^L T_\sigma(t)\sigma(t)\big(\tilde c(t)-c(t)\big)\,dt,
\]
thus
\[
\big\|
\mathcal R_r[\sigma,\tilde c]-\mathcal R_r[\sigma,c]
\big\|_2
\le
\|c-\tilde c\|_{L^\infty}\int_0^L \sigma(t)\,dt
\le
\sigma_{\max}L\|c-\tilde c\|_{L^\infty}.
\]
Combining the two estimates proves~\eqref{eq:app_stability}.
\end{proof}

\subsection{Lower bound for Gaussian splatting}

\begin{lemma}[Boundary coverage of one Gaussian]
\label{lem:app_onegauss}
Let $\Gamma\subset\mathbb R^3$ be a compact $C^2$ surface and let
\[
g(x)=a\exp\!\Big(-\tfrac12(x-\mu)^\top \Sigma^{-1}(x-\mu)\Big)
\]
with $0\le a\le a_{\max}$ and
\[
\frac{\lambda_{\max}(\Sigma)}{\lambda_{\min}(\Sigma)}\le \kappa_0.
\]
Fix $\tau\in(0,a_{\max}/4)$. Then there exists $C>0$, depending only on $\tau$, $a_{\max}$, $\kappa_0$, and the local $C^2$ geometry of $\Gamma$, such that the following holds:

If at a point $y\in\Gamma$ the normal profile
\[
s\mapsto g(y+s\,n(y))
\]
drops from a value at least $3\tau$ to a value at most $\tau$ over a normal distance at most $h$, then all such points $y$ are contained in a surface patch of area at most
\begin{equation}
C h^2.
\label{eq:app_patch}
\end{equation}
\end{lemma}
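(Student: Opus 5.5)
The plan is to turn the steep-drop hypothesis into a spatial localization of the Gaussian's transition shell, and then bound the area of $\Gamma$ that can meet that shell thickly. Write $Q(x)=(x-\mu)^\top\Sigma^{-1}(x-\mu)$, so that $g=a e^{-Q/2}$. The level sets $\{g=3\tau\}$ and $\{g=\tau\}$ are the ellipsoids $Q=q_3$ and $Q=q_1$, where
\[
q_3=2\log(a/3\tau),\qquad q_1=2\log(a/\tau),\qquad q_1-q_3=2\log 3 .
\]
These level sets are nonempty only if $a\ge 3\tau$. Let $\lambda_{\min},\lambda_{\max}$ be the eigenvalues of $\Sigma$.

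\textbf{Step 1: the drop condition forces a small Gaussian.} Along any line, $\sqrt{Q}$ is $\lambda_{\min}^{-1/2}$-Lipschitz. A drop from $Q\le q_3$ to $Q\ge q_1$ over distance $\le h$ therefore gives
\[
\sqrt{q_1}-\sqrt{q_3}\le h\,\lambda_{\min}^{-1/2}.
\]
Since $q_1\le 2\log(a_{\max}/\tau)$, we also have
\[
\sqrt{q_1}-\sqrt{q_3}=\frac{2\log 3}{\sqrt{q_1}+\sqrt{q_3}}\ge c_0(\tau,a_{\max})>0.
\]
Hence $\lambda_{\min}^{1/2}\le h/c_0$. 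Bounded anisotropy then gives $\lambda_{\max}^{1/2}\le \kappa_0^{1/2}h/c_0$. So every point of the drop segment, in particular $y$ itself (where $Q\le q_1$), lies in the ellipsoid $\{Q\le q_1\}$. That ellipsoid is contained in the ball $B(\mu,R)$ with
\[
R=\sqrt{q_1\lambda_{\max}}\le C_1 h,
\]
where $C_1$ depends only on $\tau$, $a_{\max}$, and $\kappa_0$. This is where the assumption $\tau<a_{\max}/4$ and the cap $a\le a_{\max}$ enter: they keep $q_1$ bounded above, so the shell cannot be arbitrarily thick relative to $\lambda_{\min}$. A small subtlety is that the drop starts from a value $\ge 3\tau$ but need not start at $y$. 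I would take "the normal profile at $y$" to mean the segment through $y$ of length $\le h$, so $y$ is within $h$ of a point of $\{Q\le q_1\}$. The ball then only enlarges to radius $C_1h+h$.

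\textbf{Step 2: area of $\Gamma$ inside a small ball.} The set of admissible $y$ is thus contained in $\Gamma\cap B(\mu,C_2h)$. For a compact $C^2$ surface, there is $h_0$ depending on a uniform bound on curvature and on the reach of $\Gamma$ such that, for $r<h_0$, $\Gamma\cap B(\mu,r)$ is a graph over a disc of radius $\le r$ in the tangent plane with bounded gradient. Its area is therefore at most $C_3 r^2$. Equivalently, one can invoke Lemma~\ref{lem:app_tube}: the tube $U_r(\Gamma\cap B)$ sits inside $B(\mu,2r)$, which has volume $O(r^3)$, while the lemma gives tube volume $\approx 2r\,\mathrm{Area}$. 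This yields $\mathrm{Area}\le C r^2$. For $h\ge h_0$ the bound is trivial, since $\mathrm{Area}(\Gamma)\le \mathrm{Area}(\Gamma)h^2/h_0^2$.

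\textbf{Main obstacle.} I expect the main difficulty to be Step 1: making the constant $c_0$ uniform and handling the borderline case $a$ close to $3\tau$, where $q_3\to 0$. In that case $\sqrt{q_1}-\sqrt{q_3}$ is actually larger, so the lower bound $c_0$ persists. The key point to verify is only the upper bound on $q_1$. The geometric area bound of Step 2 is standard.
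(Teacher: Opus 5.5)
Your argument is correct and follows the same skeleton as the paper's proof. The steep drop forces the Gaussian's scale to be $O(h)$ in one direction, bounded anisotropy makes it $O(h)$ in every direction, and so the admissible $y$ lie in an $O(h)$-sized region whose intersection with $\Gamma$ has area $O(h^2)$. Your execution is tidier than the paper's normal-chart computation in two ways. The Lipschitz bound on $\sqrt{Q}=\|\Sigma^{-1/2}(x-\mu)\|$ works along any line, so you never need a ``normal variance'', which in general is $1/(n^\top\Sigma^{-1}n)$ rather than an eigenvalue. Your explicit reading that the drop segment passes through $y$ is exactly the localization the paper uses implicitly; without it the claim fails, e.g.\ for a sphere with $\mu$ at its center, where every normal line passes through $\mu$.
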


\begin{proof}
Work in a normal coordinate chart around a point of $\Gamma$, so that $\Gamma$ is flattened to its tangent plane up to quadratic error.
In these coordinates, let $\lambda_n$ be the variance in the normal direction and $\lambda_{t1},\lambda_{t2}$ the tangential variances.
Along the normal line, the profile is a one-dimensional Gaussian
\[
s\mapsto a\exp\!\Big(-\frac{s^2}{2\lambda_n}\Big)
\]
up to a translation of the center.
If this profile decreases from at least $3\tau$ to at most $\tau$ over distance at most $h$, then necessarily
\[
\sqrt{\lambda_n}\lesssim h,
\]
with constant depending only on $\tau$ and $a_{\max}$.
By the bounded anisotropy condition,
\[
\lambda_{t1},\lambda_{t2}\le \kappa_0 \lambda_n,
\]
hence
\[
\sqrt{\lambda_{t1}},\sqrt{\lambda_{t2}}\lesssim \sqrt{\kappa_0}\,h.
\]
Therefore the $\tau$-superlevel footprint of $g$ on the tangent plane is contained in an ellipse of area $O(h^2)$.
Since the surface metric and tangent-plane metric are equivalent in a sufficiently small $C^2$ chart, the same $O(h^2)$ bound holds on $\Gamma$.
\end{proof}

\begin{theorem}[Lower bound for Gaussian splatting]
\label{thm:app_gs}
Let $D\subset\mathbb R^3$ be bounded and let $\Omega\subset D$ be a bounded $C^2$ domain with
\[
\Gamma=\partial\Omega\cap D,
\quad
\operatorname{Area}(\Gamma)>0.
\]
Let the target density be
\[
\sigma^*(x)=\sigma_0\,\mathbf 1_{\Omega}(x),
\quad \sigma_0>0,
\]
and let $c^*$ be bounded. Assume moreover that the ray family contains a positive-measure set of rays that intersects a positive-area patch of $\Gamma$ transversally, and that $c^*$ is constant and nonzero on a fixed neighborhood of that patch.

Then there exists $C>0$ such that for every Gaussian density $\sigma_K$ of the form~\eqref{eq:app_gauss_class},
\begin{equation}
\inf_{\sigma_K}
\mathcal E_1\big((\sigma^*,c^*),(\sigma_K,c^*)\big)
\ge
C K^{-1/2}.
\label{eq:app_gs_lower}
\end{equation}
\end{theorem}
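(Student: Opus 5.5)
The plan is to reduce the image-space error to a density-space error localized near the jump set, and then show that $K$ Gaussians can resolve at most an $O(K h^2)$ area of the interface at normal resolution $h$. This leaves an uncovered area in which the density error along each transversal ray is at least of order $h$. Balancing the two gives $h\sim K^{-1/2}$.

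\textbf{Localization and choice of $h$.} Fix the positive-area patch $\Gamma_0\subset\Gamma$ from the hypotheses and the positive-measure family of transversal rays through it. Shrink both if needed so that the normal map of Lemma~\ref{lem:app_tube} is a diffeomorphism on $U_{h_0}(\Gamma_0)$, that $c^*\equiv c_0\neq 0$ there, and that every ray in the family meets $\Gamma_0$ once at an angle bounded below. Set $\tau=\sigma_0/4$; we may assume $a_{\max}>\sigma_0$, since otherwise the bound is trivial. For $h$ to be chosen, call $y\in\Gamma_0$ \emph{resolved} if some single component $g_k$ has a normal profile that drops from $\ge 3\tau$ to $\le\tau$ within normal distance $h$ at $y$. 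By Lemma~\ref{lem:app_onegauss}, the resolved set has area at most $CKh^2$. Choosing $h=c_1K^{-1/2}$ with $c_1$ small then leaves an unresolved subset $\Gamma_0^{u}$ of area at least $\tfrac12\operatorname{Area}(\Gamma_0)$.

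\textbf{Pointwise error at unresolved points.} For $y\in\Gamma_0^{u}$, consider the normal segment $\{y+sn(y):|s|\le h\}$, where $\sigma^*$ jumps from $0$ to $\sigma_0$. The goal is $\int_{-h}^{h}|\sigma_K-\sigma^*|\,ds\gtrsim h$. The mixture is analytic (Property~\ref{prop:app_analytic}), but analyticity alone gives no quantitative bound. The quantitative step is this: if $\sigma_K$ stayed within $\sigma_0/8$ of $\sigma^*$ on most of both half-segments, then $\sigma_K$ would have to fall by about $\sigma_0$ across a length of order $h$.

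This is the main obstacle. Lemma~\ref{lem:app_onegauss} controls individual Gaussians, not their sum, so a sharp drop could in principle be produced collectively by many wide components. I would handle this in two ways. First, under bounded anisotropy, each component contributes a normal derivative of size $\lesssim a/\sqrt{\lambda_n}$, and only components with $\sqrt{\lambda_n}\lesssim h$ have a non-negligible slope on scale $h$. Second, a counting argument bounds the summed slope from wide Gaussians by the total mass they can place near $\Gamma_0$. If this fails, I would restate ``resolved'' in terms of the total contribution of components with $\sqrt{\lambda_n}\le h$, and redo the area count with the tube formula~\eqref{eq:app_tube_patch}: each such component contributes at most $O(h^2)$ area, giving $O(Kh^2)$ as before.

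\textbf{Transfer to pixels.} A transversal ray through $y\in\Gamma_0^{u}$ travels a length comparable to $h$ inside the normal tube, so $\int_0^L|\sigma_K-\sigma^*|\,dt\gtrsim h$ along it. Lemma~\ref{lem:app_stability} only gives an upper bound, so I will not use it here. Instead, I lower-bound the rendering difference directly. Since the color is fixed at $c^*=c_0$ near $\Gamma_0$, the rendered difference along the ray is $c_0\,(T_{\sigma_K}-T_{\sigma^*})$ evaluated across the tube, plus terms that can be absorbed. The transmittance error is of order $h\sigma_0$ and has a definite sign on a portion of the segment, so $\|\mathcal R_r[\sigma^*,c^*]-\mathcal R_r[\sigma_K,c^*]\|_2\gtrsim |c_0|\,h$. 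Averaging over the positive fraction of rays in $\mathcal U$ that hit $\Gamma_0^{u}$ gives $\mathcal E_1\gtrsim h\asymp K^{-1/2}$. Taking the infimum over $\sigma_K$ then yields~\eqref{eq:app_gs_lower}.
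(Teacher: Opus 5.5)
Your outline follows the paper's proof: the single-Gaussian footprint lemma, a resolved/unresolved split with $h\asymp K^{-1/2}$, the tube formula, then transfer to pixels. Two steps do not go through, and the paper's own proof does not fill either of them.

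\textbf{Transfer to pixels.} This is the decisive gap. On your rays $c^*\equiv c_0$ on a neighbourhood $N$ of $\Gamma_0$. Suppose $\sigma_K=\sigma^*$ outside $N$. The segment with $r(t)\in N$, $t\in[t_a,t_b]$, then contributes $c_0\,T(t_a)\bigl(1-e^{-\int_{t_a}^{t_b}\sigma\,dt}\bigr)$ and attenuates everything behind it by $e^{-\int_{t_a}^{t_b}\sigma\,dt}$. So the pixel sees the density in $N$ only through its integral. A definite sign of the transmittance error ``on a portion of the segment'' is invisible; only $T(t_b)$ matters.

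For example, take the Gaussian-smoothed step $\sigma_0\Phi(-s/h)$ in the normal coordinate $s$. It has $\int|\sigma-\sigma^*|\,dt\asymp h$ on every transversal ray. Its optical thickness agrees with that of $\sigma^*$ up to exponentially small terms on a flat patch, and up to $O(h^2)$ curvature terms in general, so its pixel error is $o(h)$. Hence $\int_0^L|\sigma_K-\sigma^*|\,dt\gtrsim h$ does not imply $\|\mathcal R_r[\sigma^*,c^*]-\mathcal R_r[\sigma_K,c^*]\|_2\gtrsim|c_0|h$. Under the constant-color hypothesis, no argument that uses only the size of the density mismatch can produce the image bound.

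The paper has the same hole: it simply asserts a ``fixed-sign density mismatch'' that nothing forces. You would need a mechanism that defeats this cancellation, e.g.\ a color jump across $\Gamma$, so that density smeared outward is rendered in the wrong color. Otherwise you must stop at the $L^1$ density bound.

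Separately, your per-ray claim that a transversal ray through $y$ picks up error $\gtrsim h$ is unjustified. The guaranteed error sits on normal sub-segments over nearby foot points, which the ray need not cross. Only an average over rays, via Fubini and $|V_h|\gtrsim h$, is available.

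\textbf{Collective transitions.} You correctly notice that Lemma~\ref{lem:app_onegauss} controls single components, while the mismatch concerns the mixture. The paper glosses over exactly this: it defines ``resolved'' for $\sigma_K$ and then sums single-Gaussian footprints. Your remedies do not close the gap, though.

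A component of normal width $w$ has normal slope at most about $a_{\max}/w$. Up to $K$ such components can give about $Ka_{\max}/w$. This exceeds the required $2\tau/h\asymp K^{1/2}$ for every $w\lesssim K^{1/2}$, i.e.\ for all widths that matter in a bounded domain, so the slope bound excludes nothing. The ``total mass near $\Gamma_0$'' has no a priori bound, and redefining ``resolved'' via narrow components only moves the problem.

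What is missing is a dichotomy: a steep collective drop must force $\sigma_K$ well above $\sigma_0$ on a set large enough to cost $\gtrsim h$ of error by itself. Note also that ``assume $a_{\max}>\sigma_0$, otherwise trivial'' is wrong. If $a_{\max}<3\sigma_0/4$, no single component is ever resolving, and the entire argument rests on this missing collective step.
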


\begin{proof}
We first establish an $L^1$ lower bound for the density field and then transfer it to image error.

Fix $\tau\in(0,\sigma_0/4)$.
For a scale $h>0$, call a point $y\in\Gamma$ \emph{$h$-resolved} if along the normal line through $y$, the mixture $\sigma_K$ transitions from a value at least $3\tau$ on the inside to a value at most $\tau$ on the outside over a normal distance at most $h$.

By \cref{lem:app_onegauss}, a single Gaussian can $h$-resolve at most a boundary patch of area $Ch^2$.
Summing over all $K$ Gaussian components gives
\[
\operatorname{Area}(\Gamma_h^{\mathrm{res}})\le CKh^2,
\]
where $\Gamma_h^{\mathrm{res}}\subset\Gamma$ denotes the $h$-resolved portion of the boundary.

Choose
\[
h=\eta K^{-1/2},
\]
with $\eta>0$ sufficiently small so that
\[
CKh^2\le \frac12 \operatorname{Area}(\Gamma).
\]
Then the unresolved part
\[
\Gamma_h^{\mathrm{unres}}:=\Gamma\setminus\Gamma_h^{\mathrm{res}}
\]
satisfies
\[
\operatorname{Area}(\Gamma_h^{\mathrm{unres}})\ge \frac12 \operatorname{Area}(\Gamma).
\]

For every $y\in\Gamma_h^{\mathrm{unres}}$, the transition from the inside value $\sigma_0$ to the outside value $0$ cannot occur within normal thickness $h$.
Hence there exists a normal segment through $y$ of length comparable to $h$ on which
\[
|\sigma^*(x)-\sigma_K(x)|\ge \tau.
\]
Let $V_h$ be the union of these normal segments over $y\in\Gamma_h^{\mathrm{unres}}$.
By the localized tube formula, there exists $c_1>0$ such that
\[
|V_h|
\ge
c_1 h\,\operatorname{Area}(\Gamma_h^{\mathrm{unres}})
\ge
c_2 h.
\]
Therefore
\begin{equation}
\|\sigma^*-\sigma_K\|_{L^1(D)}
\ge
\int_{V_h} |\sigma^*(x)-\sigma_K(x)|\,dx
\ge
\tau |V_h|
\ge
c_3 h
\asymp
K^{-1/2}.
\label{eq:app_density_l1}
\end{equation}

It remains to pass from field error to image error.
By hypothesis, a positive-measure set of rays intersects a visible boundary patch transversally, and $c^*$ is constant and nonzero on a fixed neighborhood of that patch.
On those rays, the pixel value is a monotone function of the optical thickness, so a fixed-sign density mismatch across the visible transition layer induces a proportional image mismatch.
Using Fubini's theorem and the transversality of the visible ray family, the $L^1$ density mismatch in~\eqref{eq:app_density_l1} yields an image-domain lower bound of the same order:
\[
\mathcal E_1\big((\sigma^*,c^*),(\sigma_K,c^*)\big)
\gtrsim
K^{-1/2}.
\]
This proves~\eqref{eq:app_gs_lower}.
\end{proof}

\subsection{Upper bound for MLP-Splatting}

The key point is that each primitive has fixed size, so the total parameter count is proportional to the number of primitives. The approximation rate is therefore determined by the number of local patches, not by an increasing network width.

\begin{lemma}[Interface-fitted cover]
\label{lem:app_cover}
Let $D\subset\mathbb R^3$ be bounded and let $\Gamma$ be a finite union of compact $C^2$ surfaces.
Then for every sufficiently small $h>0$, there exists a finite collection of sets $\{U_i\}_{i=1}^N$ such that:
\begin{enumerate}
\item $D\subset \bigcup_{i=1}^N U_i$;
\item $\operatorname{diam}(U_i)\le C h$ for all $i$;
\item each $U_i$ lies entirely in one smooth region $\Omega_\ell$ of \cref{def:app_scene};
\item the overlap multiplicity is uniformly bounded:
\[
\sup_{x\in D}\#\{i:x\in U_i\}\le C;
\]
\item the number of patches satisfies
\[
N=\Theta(h^{-3}).
\]
\end{enumerate}
\end{lemma}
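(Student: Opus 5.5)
The plan is to build the cover from a uniform cubic grid of mesh $h$. Cubes that meet the jump set $\Gamma$ get cut along $\Gamma$, and the resulting pieces are assigned to the regions $\Omega_\ell$.

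\textbf{A caveat on item 3.} As literally stated it cannot hold on $\Gamma$ itself. The sets $\Omega_\ell$ are open and pairwise disjoint, so points of $\Gamma\cap D$ lie in no $\Omega_\ell$. I will therefore prove item 3 in the form ``$U_i\subset\overline{\Omega_\ell}$ for a single $\ell$, and $U_i\cap\Gamma\subset\partial U_i$''. This is all that is needed downstream: $\Gamma$ is a Lebesgue-null set, and it is also null on almost every ray.

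\textbf{Step 1: regular cells.} Let $\mathcal Q_h$ be the half-open cubes $h(k+[0,1)^3)$, $k\in\mathbb Z^3$, that meet $D$. Since $D$ is bounded, $\#\mathcal Q_h=O(h^{-3})$. Split $\mathcal Q_h$ into two classes:
\begin{itemize}
\item interior cubes, with $\overline Q\cap\Gamma=\emptyset$;
\item interface cubes, which meet $\Gamma$.
\end{itemize}
An interior cube is connected and misses $\Gamma$, so it lies in a single $\Omega_\ell$ (up to its boundary). Such a cube is taken as one $U_i$, with diameter $\sqrt3\,h$.

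\textbf{Step 2: interface cells.} Every interface cube lies in the tube $U_{\sqrt3 h}(\Gamma)$. By the tube formula (\cref{lem:app_tube}), applied to each of the finitely many $C^2$ surfaces, this tube has volume $O(h)$. Hence there are only $O(h^{-2})$ interface cubes. For each interface cube $Q$, take the pieces $\overline{Q}\cap\overline{\Omega_\ell}$, split further into connected components. Each piece has diameter at most $\sqrt3\,h$ and lies in one closed region.

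\textbf{Main obstacle: bounded number of pieces per cube.} It remains to show that each interface cube produces a uniformly bounded number of pieces. I would argue as follows.
\begin{itemize}
\item Each $\partial\Omega_\ell$ is compact and $C^2$, so it has a positive reach $h_0$.
\item For $h\ll h_0$, each sheet of $\partial\Omega_\ell$ meeting $Q$ is, in a suitably rotated frame, the graph of a $C^2$ function over a disc of radius $O(h)$. Its curvature is bounded by $1/h_0$, so it cuts $Q$ into at most two components.
\item The number of sheets passing through $Q$ is bounded by the finite multiplicity of $\Gamma$, which is at most $M$ surfaces. Compactness of each surface also gives a bound on the number of its local sheets within distance $h_0$.
\end{itemize}
Together these give at most $C(M,h_0)$ pieces per cube. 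Where two surfaces meet (junctions of three or more regions), the graph argument must be done surface by surface. A cube near a junction is then cut by boundedly many $C^2$ sheets, and the number of cells of such an arrangement is bounded by a constant depending on $M$.

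\textbf{Step 3: counting and multiplicity.} All cells are subsets of pairwise disjoint half-open cubes. They are therefore disjoint, except that the closed interface pieces share boundary points on $\Gamma$.
\begin{itemize}
\item \emph{Multiplicity.} Away from $\Gamma$ each point lies in exactly one cell. A point of $\Gamma$ lies in at most $C(M,h_0)$ pieces. This gives item 4. If overlapping supports are wanted for the partition of unity $\{w_i\}$ in \eqref{eq:app_mlps_fields}, I would dilate each interior cube by a factor $1+\frac12$. Each point of $D$ then lies in at most $27$ dilated cubes, so the multiplicity bound survives.
\item \emph{Upper bound on $N$.} The count is $N\le O(h^{-3})+C(M,h_0)\,O(h^{-2})=O(h^{-3})$.
\item \emph{Lower bound on $N$.} Each cell has volume at most $(Ch)^3$ and the cells cover $D$, so $N\ge |D|/(Ch)^3$. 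This uses $|D|>0$, which I assume as implicit in the scene class (if $|D|=0$ the count can trivially be lowered). Hence $N=\Theta(h^{-3})$, which is item 5.
\end{itemize}
Items 1 and 2 hold by construction.
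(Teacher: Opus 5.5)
Your overall route is the paper's (uniform grid, keep the cubes away from $\Gamma$, cut the interface cubes along $\Gamma$, count by volume). Your caveat on item 3 is correct and sharper than the paper: no point of $\Gamma$ lies in any of the pairwise disjoint open sets $\Omega_\ell$, so items 1 and 3 cannot both hold literally. The step that fails is your ``main obstacle''. You split $\overline{Q}\cap\overline{\Omega_\ell}$ into connected components and then assert that a $C^2$ sheet of curvature at most $1/h_0$ cuts $Q$ into at most two components. Reach and curvature bounds control the sheet, not how it meets the grid faces, and the assertion is false. Let $\phi(x)=x^5\sin(1/x)$ with $\phi(0)=0$. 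This is $C^2$ (indeed $\phi''(x)\to 0$ as $x\to 0$), so the graph $z=\phi(x)$ can be a piece of $\partial\Omega_1$ with $\Omega_1=\{z<\phi(x)\}$ near the origin. For every $h$ the plane $z=0$ is a grid plane of $h(k+[0,1)^3)$. For $Q=[0,h)^3$ one gets $\overline{Q}\cap\overline{\Omega_1}=\{0\le x,y\le h,\ 0\le z\le \phi(x)\}$, which has a separate component over each interval where $\sin(1/x)\ge 0$, i.e.\ infinitely many. Even a sphere breaks the bound of two: a very large ball can contain all of $\overline{Q}$ except small disjoint neighbourhoods of the two endpoints of one edge, giving three components. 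So with connected components your collection need not be finite and item 5 fails. Your bound on the number of cells of an arrangement of sheets near a junction breaks for the same reason.

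The repair is to drop connectedness, which the lemma never asks for. For every cube $Q\in\mathcal{Q}_h$, interior or not, take the nonempty sets $\overline{Q}\cap\overline{\Omega_\ell}$, $\ell=1,\dots,M$. They cover $D$ because $D\subset\overline{D}\subset\bigcup_\ell\overline{\Omega_\ell}$. Each has diameter at most $\sqrt3\,h$ and lies in one closed region, and every point lies in at most $8M$ of them. Moreover $|D|\,h^{-3}\le N\le M\,\#\mathcal{Q}_h=O(h^{-3})$. This is the reading of the paper's ``split the cube along the interface into finitely many subpatches'' under which its uniform-boundedness claim is actually true. It also makes your reach argument and the tube-formula count of interface cubes unnecessary. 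It further removes a smaller slip in your Step 1: $\overline{Q}\cap\Gamma=\emptyset$ only excludes the interface inside $D$. A cube that protrudes from $D$, or meets $D$ in two pieces, therefore need not lie in a single $\Omega_\ell$. Finally, your optional $1.5\times$ dilation of interior cubes can push them across $\Gamma$, which breaks item 3 again.
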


\begin{proof}
Start from a regular cubic grid of mesh size comparable to $h$ on a box containing $D$.
For cubes that do not intersect $\Gamma$, keep them unchanged.
For cubes intersecting $\Gamma$, use the $C^2$ regularity of $\Gamma$ to split the cube along the interface into finitely many subpatches, each lying entirely on one side of the interface.
The number of generated subpatches per intersected cube is uniformly bounded for sufficiently small $h$.
This yields an interface-fitted cover with bounded overlap and cardinality $N=O(h^{-3})$.
The converse lower bound $N=\Omega(h^{-3})$ is immediate from volume comparison.
\end{proof}

\begin{lemma}[Partition of unity]
\label{lem:app_pou}
Under the cover of \cref{lem:app_cover}, there exists a smooth partition of unity $\{w_i\}_{i=1}^N$ subordinate to $\{U_i\}_{i=1}^N$ such that
\[
0\le w_i\le 1,\quad \sum_{i=1}^N w_i(x)=1\quad \text{for all }x\in D,
\]
and for every multi-index $\alpha$ with $|\alpha|\le s$,
\[
\|\partial^\alpha w_i\|_{L^\infty}\le C_\alpha h^{-|\alpha|}.
\]
\end{lemma}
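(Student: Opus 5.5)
The plan is the standard normalized-bump construction, done at scale $h$ so that every derivative costs a factor $h^{-1}$. Let $\{U_i\}_{i=1}^N$ be the interface-fitted cover of \cref{lem:app_cover}. We will build nonnegative smooth bumps $\varphi_i$ supported in $U_i$ and set
\[
w_i(x)=\frac{\varphi_i(x)}{S(x)},\qquad S(x)=\sum_{j=1}^N\varphi_j(x).
\]
Nonnegativity, $w_i\le 1$ and $\sum_i w_i\equiv 1$ are then automatic wherever $S>0$. The work is to arrange two things: a uniform lower bound $S\ge c_0>0$ on $D$, and the derivative bounds $\|\partial^\alpha\varphi_i\|_{L^\infty}\le C_\alpha h^{-|\alpha|}$ for $|\alpha|\le s$.

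\textbf{Step 1: a Lebesgue-number property of the cover.} First we extract a quantitative version of the cover from the proof of \cref{lem:app_cover}. We want shrunken sets $V_i\subset U_i$ with three properties:
\begin{itemize}
\item $D\subset\bigcup_i V_i$;
\item $\operatorname{dist}(V_i,\mathbb R^3\setminus U_i)\ge c\,h$ for a fixed $c>0$;
\item the bounded overlap and $\operatorname{diam}\le Ch$ are inherited.
\end{itemize}
Away from $\Gamma$ this is immediate: take $V_i$ to be the grid cube and $U_i$ its concentric dilation by a fixed factor, say $3/2$. Bounded multiplicity is preserved because the dilation factor is fixed.

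Near $\Gamma$ we flatten $\Gamma$ in a $C^2$ chart of size $\gg h$, which is possible once $h<h_0$, exactly as in \cref{lem:app_onegauss}. In that chart the interface-fitted subcells are graph-bounded boxes, and we dilate them tangentially and normally by a factor comparable to $h$. This dilation is the delicate point taken up in Step 4.

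\textbf{Step 2: bumps and their derivatives.} Fix a standard mollifier $\eta\in C_c^\infty(B_1)$ with $\eta\ge0$ and $\int\eta=1$, and write $\eta_\delta=\delta^{-3}\eta(\cdot/\delta)$. Define
\[
\varphi_i=\mathbf 1_{V_i^{\delta}}*\eta_{\delta},\qquad \delta=\tfrac{c}{3}h,
\]
where $V_i^\delta$ is the $\delta$-neighbourhood of $V_i$. Then $\operatorname{supp}\varphi_i\subset V_i^{2\delta}\subset U_i$, and $\varphi_i\equiv1$ on $V_i$, so $0\le\varphi_i\le1$. Moreover
\[
\|\partial^\alpha\varphi_i\|_{L^\infty}\le\|\partial^\alpha\eta_\delta\|_{L^1}=\delta^{-|\alpha|}\|\partial^\alpha\eta\|_{L^1}=C_\alpha h^{-|\alpha|}.
\]
Since the $V_i$ cover $D$, we get $S\ge1$ on $D$. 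By the bounded multiplicity in \cref{lem:app_cover}, at most $C$ terms of $S$ are nonzero at any point, so $\|\partial^\alpha S\|_{L^\infty}\le C C_\alpha h^{-|\alpha|}$.

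\textbf{Step 3: derivatives of the quotient.} We apply the Leibniz rule to $w_i=\varphi_i\cdot(1/S)$, together with the Fa\`a di Bruno formula for $\partial^\beta(1/S)$. Each term of $\partial^\beta(1/S)$ is a product of factors $\partial^{\gamma_k}S$ with $\sum_k|\gamma_k|=|\beta|$, divided by a power of $S$. Using $S\ge1$ and the bounds from Step 2, every such term is $O(h^{-|\beta|})$. Combining, $\|\partial^\alpha w_i\|_{L^\infty}\le C_\alpha h^{-|\alpha|}$ for all $|\alpha|\le s$, with constants depending only on $s$, $\eta$, $c$ and the overlap bound. This proves the lemma, provided Step 1 can be carried out.

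\textbf{Step 4: the main obstacle, which I expect to be Step 1 at $\Gamma$.} The difficulty is reconciling condition 3 of \cref{lem:app_cover}, that each $U_i$ lies in a single $\Omega_\ell$, with the requirement $\sum_i w_i=1$ at every $x\in D$, including points of $\Gamma$. A smooth $w_i$ supported in an open set contained in $\Omega_\ell$ vanishes on $\partial\Omega_\ell$.

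I would try first to read the subpatches as sets $U_i\subset\overline{\Omega_\ell}$, relatively open in $D$, containing the interface from one side, and to read "subordinate" as $\operatorname{supp}w_i\cap D\subset U_i$. Concretely, the interface cells are dilated only into their own closed side and along the tangential directions, and the bump is taken to be $1$ on a one-sided collar of $\Gamma$. The mollification in Step 2 then leaks across $\Gamma$ only by $\delta\sim h$.

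If this reading is not admissible, I would instead enlarge the interface cells by $O(h)$ across $\Gamma$. The lemma as stated then still holds on $D$, at the cost of condition 3 holding only up to an $O(h)$-thick layer. That layer has volume $O(h)$ by \cref{lem:app_tube}, and would have to be absorbed later when the partition of unity is used in the upper-bound theorem.
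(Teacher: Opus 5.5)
Your Steps 2--3 are the paper's own argument (``smooth bump functions supported in slightly enlarged patches, normalized by their sum''), and you carry them out correctly. Mollified indicators with $\delta\sim h$, together with $S\ge 1$, bounded overlap and Fa\`a di Bruno, give $\|\partial^\alpha w_i\|_{L^\infty}\le C_\alpha h^{-|\alpha|}$ once you have a cover with Lebesgue number of order $h$. The gap is the one you isolate in Step~4. It cannot be removed by a cleverer dilation, because it is an obstruction to the statement itself. The $\Omega_\ell$ are open and pairwise disjoint, so no point of $\Gamma$ lies in any $\Omega_\ell$. Hence with $U_i\subset\Omega_\ell$, properties 1 and 3 of Lemma~\ref{lem:app_cover} are already incompatible once $\Gamma\neq\emptyset$. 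With $U_i\subset\overline{\Omega_\ell}$ instead, any continuous $w_i$ on $\mathbb R^3$ with $\operatorname{supp}w_i\subset\overline{\Omega_\ell}$ vanishes on $\partial\Omega_\ell$, since points of $\mathbb R^3\setminus\overline{\Omega_\ell}$ accumulate at every point of a $C^2$ boundary. So $\sum_i w_i=0$ on $\Gamma\subset D$. No globally smooth partition of unity summing to $1$ on $D$ can be subordinate to an interface-fitted cover. The paper's one-line proof passes over this: its enlarged interface patches cross $\Gamma$ and violate property~3.

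Neither of your fallbacks closes the gap. The first is inconsistent. If $\operatorname{supp}w_i\cap D\subset U_i\subset\overline{\Omega_\ell}$, a smooth $w_i$ vanishes on the far side of $\Gamma$ inside $D$, hence on $\Gamma$, so it cannot equal $1$ on a collar touching $\Gamma$. Letting the mollification leak across $\Gamma$ by $\delta$ is precisely a violation of that support condition. The second proves a true lemma for a different cover, but the layer cannot be absorbed downstream. In Theorem~\ref{thm:app_mlps} each $u_i$ is controlled only on its own side of $\Gamma$. The weights of the two sides sum to $1$ at each point of $\Gamma$ and have gradients $\lesssim h^{-1}$, so at each $y\in\Gamma$ one side's weights stay $\ge 1/4$ along a normal segment of length $\gtrsim h$ into the other side. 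There $w_i(\sigma^*-u_i)$ is of the size of the jump, for the one-sided jets used in the construction. This costs order $h^{1/2}$ in $\|\sigma^*-\sigma_N\|_{L^2(D)}$ and order $h$ in the ray integral of Lemma~\ref{lem:app_stability} on transversal rays, which ruins the $h^s$ rate for $s>1$.

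What is provable is your first fallback with the leak cut off, and it is all that the proof of Theorem~\ref{thm:app_mlps} uses: $0\le w_i\le 1$, $\sum_i w_i=1$, bounded overlap and one-region supports, never the derivative bounds. For each $\ell$, run your away-from-$\Gamma$ recipe (grid cubes, bumps equal to $1$ on the cube and supported in its $3/2$-dilation) on the cubes meeting $\overline{\Omega_\ell}\cap\overline D$. Normalize as in Step~3 to get $\{\tilde w_i\}_{i\in I_\ell}$ summing to $1$ on $\overline{\Omega_\ell}\cap\overline D$. Then set $w_i=\tilde w_i\,\mathbf 1_{\{\ell(x)=\ell\}}$, where $\ell(x)$ is the index of the region containing $x$, with a fixed choice on $\Gamma$. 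These $w_i$ are subordinate to the patches $U_i\cap\overline{\Omega_\ell}$ and satisfy every stated property, with the derivative bounds holding on each $\Omega_\ell$ up to $\Gamma$. They are, however, discontinuous across $\Gamma$, and by the argument in the first paragraph this cannot be avoided.
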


\begin{proof}
This is standard for bounded-overlap covers by patches of diameter $O(h)$: construct smooth bump functions supported in slightly enlarged patches and normalize by their sum.
\end{proof}

\begin{theorem}[Upper bound for MLP-Splatting]
\label{thm:app_mlps}
Let $(\sigma^*,c^*)$ belong to the piecewise-$C^s$ scene class of \cref{def:app_scene}.
Assume the fixed primitive architecture is chosen so that, on a normalized patch, a primitive can represent the local degree-$(s-1)$ jet of the target density and color in the 6D primitive-local coordinates.
Then there exists a constant $C>0$ such that for every sufficiently small patch size $h>0$, one can construct an MLP-Splatting representation $(\sigma_N,c_N)$ with
\begin{equation}
\|\sigma^*-\sigma_N\|_{L^2(D)}
+
\|c^*-c_N\|_{L^\infty(D\times\mathbb S^2)}
\le
C h^s,
\label{eq:app_mlps_field}
\end{equation}
and therefore
\begin{equation}
\mathcal E_2\big((\sigma^*,c^*),(\sigma_N,c_N)\big)
\le
C h^s.
\label{eq:app_mlps_img}
\end{equation}
Since the total parameter count is $\Theta(N)$ and $N=\Theta(h^{-3})$, achieving image error $\varepsilon$ requires
\begin{equation}
\Theta(\varepsilon^{-3/s})
\label{eq:app_mlps_rate}
\end{equation}
parameters.
\end{theorem}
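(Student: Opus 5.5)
The construction is local: build the interface-fitted cover of \cref{lem:app_cover} at scale $h$, then the partition of unity of \cref{lem:app_pou}, and attach one fixed-size primitive to each patch. On each $U_i$ the target lies in a single smooth region $\Omega_\ell$, so $\sigma^*|_{U_i}$ and $c^*(\cdot,d)|_{U_i}$ are $C^s$. Let $x_i$ be the patch center and let $P_i^\sigma$, $P_i^c$ be the degree-$(s-1)$ Taylor polynomials of $\sigma^*$ and $c^*$ at $x_i$. For $c^*$ the expansion is taken uniformly in $d$.

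\textbf{Step 1 (local Taylor bounds).} On a patch of diameter $\le Ch$, Taylor's theorem gives
\[
\|\sigma^*-P_i^\sigma\|_{L^\infty(U_i)}\le C h^s,\qquad \|c^*-P_i^c\|_{L^\infty(U_i\times\mathbb S^2)}\le Ch^s .
\]
The constants depend only on the $C^s$ norms on each $\Omega_\ell$. Since $U_i\subset\Omega_\ell$, no derivative ever crosses $\Gamma$.

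\textbf{Step 2 (realizing the jets by primitives).} By the architectural hypothesis, after rescaling the patch to unit size through the primitive-local coordinate map $\zeta_i$, the network realizes these jets. Concretely, choose $\theta_i$ so that $\rho(f_i^\sigma(\zeta_i(x)))$ equals $P_i^\sigma$ and $\mathrm{sigmoid}(f_i^{\mathrm{rgb}})$ equals $P_i^c$ on $U_i$, up to $O(h^s)$. Two points need checking here. First, the output nonlinearities are invertible on the relevant ranges: $\sigma^*\ge0$, and for softplus one may add an $O(h^s)$ offset to stay in range, while colors lie in $(0,1)$ after an $O(h^s)$ shrink. Second, the Lipschitz constants of $\rho^{-1}$ and $\mathrm{sigmoid}^{-1}$ on these ranges are $O(1)$.

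\textbf{Step 3 (global assembly).} Set $\sigma_N=\sum_i w_iu_i$ and $c_N=\sum_i w_iv_i$ as in \eqref{eq:app_mlps_fields}. Because $\sum_i w_i=1$ and $w_i\ge0$,
\[
|\sigma^*-\sigma_N|(x)\le\sum_{i:\,x\in U_i} w_i(x)\,|\sigma^*(x)-u_i(x)|\le \max_{i:\,x\in U_i}\|\sigma^*-u_i\|_{L^\infty(U_i)}\le Ch^s,
\]
and the same bound holds for $c$. The bounded overlap keeps constants uniform. Integrating over the bounded set $D$ gives the $L^2$ bound \eqref{eq:app_mlps_field}. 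Only zeroth-order approximation is needed, so the derivative bounds on $w_i$ are not used here; they would matter only for a Sobolev-norm version.

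\textbf{Step 4 (transfer to image error).} Apply \cref{lem:app_stability}: for each ray, the rendering error is at most $C_1\int_0^L|\sigma^*-\sigma_N|\,dt+C_2\|c^*-c_N\|_\infty$. The hypotheses of the lemma hold because $\sigma_N$ is bounded by $\sup\sigma^*+O(h^s)$, and $c_N$ is bounded since it is a convex combination of sigmoids. Since the pointwise density error is $O(h^s)$ in $L^\infty$, which Step 3 actually gives, the line integral is $\le LCh^s$ on every ray. Hence every ray error is $O(h^s)$, and $\mathcal E_2\le Ch^s$ follows without averaging subtleties.

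\textbf{Step 5 (counting).} $N=\Theta(h^{-3})$ and each primitive has fixed size, so the parameter count is $\Theta(h^{-3})$. Setting $h\asymp\varepsilon^{1/s}$ gives \eqref{eq:app_mlps_rate}.

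\textbf{Main obstacle.} The delicate step is Step 2: interpreting the hypothesis that a fixed-size network on 6D local input reproduces a spatial jet on $U_i$. The input $\zeta_i(x)=[d;r_i]$ depends on the ray through the soft contact point, not on $x$ alone. I would handle this by treating $\zeta_i$ as an abstract local chart, which is what the analysis section does, and by assuming it is bi-Lipschitz on $U_i$ with constants uniform after rescaling. The $O(h^s)$ errors then pass through the chart unchanged. If a uniform approximation rather than exact representation is all that is available, I would use an error $\delta$ in Step 2 and absorb it by requiring $\delta\le h^s$. For a fixed-size net, that is precisely the content of the assumption.
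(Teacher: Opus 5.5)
Your skeleton matches the paper's: interface-fitted cover, local degree-$(s-1)$ jets, the architectural hypothesis, assembly by the partition of unity, the stability lemma, and $h\asymp\varepsilon^{1/s}$. The one substantive difference is that you carry sup-norm control on each patch. That is an improvement, because it is what lets the stated estimates actually close.

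The paper only asks for $\|u_i-\sigma^*\|_{L^2(U_i)}\le Ch^s$. By Jensen, summing over $N\asymp h^{-3}$ patches then yields only $\|\sigma^*-\sigma_N\|_{L^2(D)}\lesssim N^{1/2}h^s\asymp h^{s-3/2}$. The missing factor $|U_i|^{1/2}\lesssim h^{3/2}$ is exactly what a sup-norm bound supplies. The paper then reaches image error by averaging over $\mathcal U$ and applying Cauchy--Schwarz to a volumetric $L^2$ bound. But $\mathcal U$ is a finite set of rays, each a null set in $\mathbb R^3$, so $\|\sigma^*-\sigma_N\|_{L^2(D)}$ gives no control of $\int_0^L|\sigma^*-\sigma_N|(r(t))\,dt$.

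Your pointwise bound $|\sigma^*-\sigma_N|\le Ch^s$ gives the $L^2(D)$ estimate, with a factor $|D|^{1/2}$. It also gives, ray by ray, a rendering error at most $(C_1L+C_2)Ch^s$. So your version genuinely proves both displayed inequalities.

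Make explicit your observation that Step 3 uses only $w_i\ge0$, $\sum_i w_i=1$ and $\operatorname{supp}w_i\subset U_i$. When $\Gamma\neq\emptyset$, no \emph{continuous} partition can be subordinate to an interface-fitted cover: summing the $w_i$ over the patches inside one region would give a continuous function equal to $1$ there and $0$ on the adjacent region. So take $w_i$ discontinuous across $\Gamma$, which the model definition permits, rather than the smooth partition of the lemma.

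One claim in Step 2 is false. On the ranges you describe, $(\rho^{-1})'(z)=e^z/(e^z-1)\sim 1/z$ and $(\mathrm{sigmoid}^{-1})'(z)=1/(z(1-z))$. After an offset or shrink of size $h^s$, both inverses therefore have Lipschitz constants of order $h^{-s}$, not $O(1)$.

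What transfers pre-activation errors to outputs is the forward direction: softplus is $1$-Lipschitz and sigmoid is $\tfrac14$-Lipschitz. But then the pre-activation targets are $\rho^{-1}\circ(\sigma^*+h^s)$ and the logit of the shrunk color. Their $C^s$ norms blow up like negative powers of $h$ on patches where $\sigma^*$ is small without vanishing identically, or where $c^*$ approaches $0$ or $1$. On those patches the Taylor remainder is no longer $O(h^s)$. Patches with $\sigma^*\equiv 0$ are harmless: a constant pre-activation $\le\rho^{-1}(h^s)$ works there.

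The simplest repair is to read the architectural hypothesis as the paper does, as a statement about the primitive outputs $u_i$, $v_i$ themselves. That is, parameters exist with $\|u_i-\sigma^*\|_{L^\infty(U_i)}+\|v_i-c^*\|_{L^\infty(U_i\times\mathbb S^2)}\le Ch^s$. Then drop the inversion discussion.

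Finally, Step 5, like the paper, shows only that $\Theta(\varepsilon^{-3/s})$ parameters suffice. The word ``requires'' in the statement would need a matching lower bound that neither argument provides.
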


\begin{proof}
By \cref{lem:app_cover}, for each sufficiently small $h>0$ there exists an interface-fitted cover $\{U_i\}_{i=1}^N$ with $N=\Theta(h^{-3})$ and bounded overlap.
By construction, each patch lies entirely inside a single smooth region of the scene.

On each patch $U_i$, choose the primitive-local coordinate map
\[
x_i=\zeta_i(x)=[d;r_i]\in\mathbb R^6.
\]
Since $\sigma^*$ and $c^*$ are $C^s$ on $U_i$, standard local approximation theory implies that after rescaling the patch to unit size, the local degree-$(s-1)$ Taylor jet approximates both fields with error $O(h^s)$.
By the hypothesis on the fixed primitive architecture and by classical neural approximation results for smooth and piecewise-smooth targets~\cite{yarotsky2017error,petersen2018optimal}, one can choose primitive parameters so that
\[
\|u_i-\sigma^*\|_{L^2(U_i)}\le C h^s,
\quad
\|v_i-c^*\|_{L^\infty(U_i\times\mathbb S^2)}\le C h^s.
\]

Let $\{w_i\}_{i=1}^N$ be the smooth partition of unity from \cref{lem:app_pou}, and define $\sigma_N,c_N$ by~\eqref{eq:app_mlps_fields}.
Then
\[
\sigma^*(x)-\sigma_N(x)=\sum_{i=1}^N w_i(x)\big(\sigma^*(x)-u_i(x)\big),
\]
\[
c^*(x,d)-c_N(x,d)=\sum_{i=1}^N w_i(x)\big(c^*(x,d)-v_i(x,d)\big).
\]
Using $\sum_i w_i=1$ and the bounded overlap, we obtain
\[
\|\sigma^*-\sigma_N\|_{L^2(D)}\le C h^s,
\quad
\|c^*-c_N\|_{L^\infty(D\times\mathbb S^2)}\le C h^s,
\]
which proves~\eqref{eq:app_mlps_field}.

Applying \cref{lem:app_stability} ray by ray yields
\[
\|\mathcal R_r[\sigma^*,c^*]-\mathcal R_r[\sigma_N,c_N]\|_2
\le
C_1\int_0^L |\sigma^*(r(t))-\sigma_N(r(t))|\,dt
+
C_2 \|c^*-c_N\|_{L^\infty}.
\]
Averaging over $r\in\mathcal U$ and using Cauchy--Schwarz on the density term gives~\eqref{eq:app_mlps_img}.

Finally, to achieve image error at most $\varepsilon$, choose
\[
h\asymp \varepsilon^{1/s}.
\]
Since $N=\Theta(h^{-3})$ and each primitive has fixed size, the total parameter count is
\[
\Theta(h^{-3})=\Theta(\varepsilon^{-3/s}),
\]
proving~\eqref{eq:app_mlps_rate}.
\end{proof}

\subsection{Equal-quality memory law}

\begin{corollary}[Equal-quality memory law]
\label{cor:app_memory}
For piecewise-$C^s$ scenes in $d=3$:
\begin{enumerate}
\item Gaussian splatting requires
\[
\Omega(\varepsilon^{-2})
\]
parameters to achieve image error $\varepsilon$;

\item MLP-Splatting with fixed-size neural primitives requires
\[
\Theta(\varepsilon^{-3/s})
\]
parameters to achieve image error $\varepsilon$.
\end{enumerate}
In particular, for $s\ge 2$,
\[
\varepsilon^{-3/s}\le \varepsilon^{-3/2},
\]
which is asymptotically smaller than the Gaussian rate $\varepsilon^{-2}$.
\end{corollary}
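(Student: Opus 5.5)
The corollary follows by inverting the two rates established above; no new estimates are needed. The plan is to treat the Gaussian and MLP parts separately, then compare exponents.

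\textbf{Gaussian part.} The plan is to invoke \cref{thm:app_gs}. Its hypotheses are the following.
\begin{enumerate}
\item The target $(\sigma^*,c^*)$ has a nontrivial jump set $\Gamma$ of positive area.
\item The density jumps by a fixed $\sigma_0>0$ across $\Gamma$.
\item A transversal, positive-measure ray family sees that patch with constant nonzero color.
\end{enumerate}
Any piecewise-$C^s$ scene containing a genuine visible interface satisfies these. This is the sense in which the lower bound is meant for the class: the piecewise-$C^s$ class contains such scenes, and the bound is worst-case over it.

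The theorem then gives $\mathcal E_1 \ge C K^{-1/2}$ for every admissible $K$-term mixture. Requiring $\mathcal E_1\le\varepsilon$ forces $K\ge (C/\varepsilon)^2$. Since the parameter count of the regular Gaussian class is $\Theta(K)$, this is $\Omega(\varepsilon^{-2})$ parameters.

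The comparison is done in $\mathcal E_1$ on the Gaussian side and in $\mathcal E_2$ on the MLP side. On a finite ray set with uniform averaging, Jensen gives $\mathcal E_1\le\mathcal E_2$. Consequently the Gaussian lower bound also holds in $\mathcal E_2$, so both statements refer to the same quality criterion.

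\textbf{MLP part.} The upper bound is read directly off \cref{thm:app_mlps}. Choosing $h\asymp\varepsilon^{1/s}$ gives $\mathcal E_2\le\varepsilon$ with $N=\Theta(h^{-3})=\Theta(\varepsilon^{-3/s})$ fixed-size primitives, hence $O(\varepsilon^{-3/s})$ parameters.

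The $\Theta$ also requires a matching lower estimate. Here I would not claim optimality over all MLP configurations. Instead I would read the $\Theta$, as the theorem does, as the cost of the interface-fitted construction: by \cref{lem:app_cover}, $N=\Omega(h^{-3})$ follows from volume comparison at patch size $h$.

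\textbf{Comparison and main obstacle.} For $s\ge 2$ and $\varepsilon<1$ we have $3/s\le 3/2<2$, so $\varepsilon^{-3/s}\le\varepsilon^{-3/2}=o(\varepsilon^{-2})$.

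The main obstacle is making the two bounds refer to the same scene and the same notion of error. First, the Gaussian bound is for a specific interface scene inside the class, while the MLP bound is uniform over the class. I would state explicitly that the comparison is on scenes having a visible jump interface, where both theorems apply. Second, the $\mathcal E_1$ versus $\mathcal E_2$ gap is closed by the Jensen step above.

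If the stronger claim that MLP-Splatting genuinely needs $\Omega(\varepsilon^{-3/s})$ parameters is wanted, I would first try a standard counting argument, namely a lower bound on the metric entropy of $C^s$ balls at fixed-size primitive budgets. I would mark this as beyond the stated results.
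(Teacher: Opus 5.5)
Your proposal is correct and matches the paper's proof. Like the paper, you invert $CK^{-1/2}\le\varepsilon$ from \cref{thm:app_gs} to get $K\ge C'\varepsilon^{-2}$, and you read the $\Theta(\varepsilon^{-3/s})$ cost off the construction of \cref{thm:app_mlps} with $h\asymp\varepsilon^{1/s}$ and $N=\Theta(h^{-3})$.

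Two of your additions are clarifications the paper leaves implicit: using Jensen's inequality $\mathcal E_1\le\mathcal E_2$ to put both bounds on one error criterion, and reading the $\Theta$ as the cost of the interface-fitted construction. One point to tighten: \cref{thm:app_gs} is stated only for $\sigma^*=\sigma_0\mathbf 1_{\Omega}$. So what it literally supports is your worst-case-over-the-class reading, not ``any scene with a visible interface.''
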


\begin{proof}
The Gaussian statement follows immediately from \cref{thm:app_gs}: if
\[
C K^{-1/2}\le \varepsilon,
\]
then necessarily
\[
K\ge C' \varepsilon^{-2}.
\]
The MLP-Splatting statement follows from \cref{thm:app_mlps}, which gives a constructive representation with parameter cost $\Theta(\varepsilon^{-3/s})$.
\end{proof}

\end{document}